\documentclass[twoside]{article}
\usepackage[preprint]{aistats2027}
\usepackage[round]{natbib}

\usepackage{amsmath,amssymb,amsthm,mathtools,mathrsfs,bbm}
\usepackage{centernot,wrapfig,xspace}
\usepackage{booktabs,enumitem,graphicx,float}
\usepackage{subcaption}
\graphicspath{{figures/}}
\usepackage{xcolor}
\definecolor{modeblue}{HTML}{1D4ED8}
\definecolor{modered}{HTML}{B91C1C}
\usepackage{algorithm}
\usepackage[noend]{algorithmic}

\usepackage{hyperref,url}
\hypersetup{hidelinks}
\usepackage{cleveref}
\newtheorem{theorem}{Theorem}
\newtheorem{proposition}[theorem]{Proposition}

\newtheorem{corollary}[theorem]{Corollary}
\theoremstyle{definition}

\newtheorem{assumption}{Assumption}
\newcommand{\cC}{\mathcal C}
\newcommand{\cK}{\mathcal K}
\newcommand{\cR}{\mathcal R}
\newcommand{\cA}{\mathcal A}
\newcommand{\bE}{\mathbb E}
\newcommand{\bP}{\mathbb P}
\newcommand{\bR}{\mathbb R}
\newcommand{\one}{\mathbbm 1}
\renewcommand{\d}{\mathrm{d}}
\DeclareMathOperator*{\argmin}{arg\,min}

\newcommand{\casa}{\texttt{CASA}}

\newcommand*{\eg}{\emph{e.g.}\xspace}

\begin{document}
\twocolumn[
\aistatstitle{Sample-Efficient Generative Conformal Prediction}
\aistatsauthor{Minxing Zheng \And Shixiang Zhu}
\aistatsaddress{Carnegie Mellon University \And Carnegie Mellon University}
]

\begin{abstract}
Generative conformal prediction builds uncertainty sets from samples of a conditional generator, which are efficient only when the samples represent the response distribution well. This can require many samples, each of which can be costly, as in large diffusion models and scientific simulators, so the sampling budget must be used efficiently. Existing methods draw the same number of samples at every input, wasting samples where the response distribution is simple and undersampling where it is complex, which inflates sets and leaves those inputs under-covered. We propose \casa{} (Conformal Adaptive Sample Allocation), which characterizes the marginal value of an additional sample and allocates samples across inputs to minimize the expected set size subject to marginal coverage and an average sampling budget. Theoretical analysis shows that adaptive allocation yields smaller sets than a fixed count at the same budget: a missed mode forces a radius that spans the gap between modes, and even oracle radius cannot compensate for it. On synthetic and real tasks, \casa{} produces substantially smaller sets at the same budget, often improves conditional coverage, and complements existing radius-adaptive methods.
\end{abstract}

\section{Introduction}

Conformal prediction provides distribution-free uncertainty sets for arbitrary predictive models under exchangeability \citep{vovk2005algorithmic,angelopoulos2023conformal}. This model-agnostic guarantee is especially useful for black-box generators, including diffusion models, large language models, and scientific simulators. Rather than conformalizing a single point prediction, generative conformal methods construct sets from multiple model samples, allowing the set to reflect complex or multimodal predictive distributions \citep{wang2023pcp}. Probabilistic conformal prediction (PCP), for example, places calibrated balls around generated samples; unlike a connected region, their union can cover separated modes without filling the low-density regions between them.

This efficiency, however, depends on drawing enough samples to represent the predictive distribution. In many applications, each draw is itself costly: diffusion models require repeated network evaluations, language models decode outputs sequentially, and scientific simulators may require large-scale computing \citep{ho2020denoising,brown2020language}. Climate modeling provides a particularly stark example: a high-resolution Community Earth System Model (CESM) run used $23{,}404$ processor cores, advanced by only about two simulated years per day, and consumed approximately $250{,}000$ processor-hours for each simulated year \citep{small2014synoptic}.
Thus, a single $100$-year ensemble member required roughly $50$ days of wall-clock time on more than $23{,}000$ cores---about $25$ million processor-hours.
Even at coarser resolution, the CESM Large Ensemble project required more than 17 million core-hours across its simulations \citep{kay2015cesm}. When each sample is a decades- or centuries-long climate trajectory, drawing a large, fixed number of samples for every initial condition or forcing scenario is computationally prohibitive.

Applying conformal prediction to such models therefore requires using a limited sampling budget efficiently. Existing methods treat the sample count as a fixed hyperparameter and use the same number of draws for every input, despite substantial heterogeneity in their value. A few draws may adequately represent a concentrated distribution, whereas the same count may miss an entire mode of a more complex distribution. Uniform sampling consequently wastes computation on simple inputs while undersampling the difficult inputs that drive prediction-set size and conditional-coverage imbalance. This raises our central question: \emph{how should a limited sampling budget be allocated across inputs to construct small conformal sets while mitigating conditional-coverage imbalance?}

To this end, we propose \casa{} (Conformal Adaptive Sample Allocation), which learns an input-dependent sample count under an average sampling budget. Its guiding principle is simple: allocate additional draws where they are likely to recover response mass missed by the current samples, and stop sampling where further draws mainly add set volume. In the example of Figure~\ref{fig:motivation}, this means moving samples from a concentrated input, where two draws suffice, to a multimodal input, where additional draws can reveal a missing mode. Better mode representation can sharply reduce the calibrated radius, shrinking prediction sets even at inputs that receive fewer samples.

\begin{figure}[!t]
\centering
\noindent\makebox[0.5\columnwidth][c]{\color{modeblue}\small $Y\mid X=x_1$}%
\makebox[0.5\columnwidth][c]{\color{modered}\small $Y\mid X=x_2$}\par\vspace{2pt}
\begin{subfigure}{\columnwidth}
\centering
\includegraphics[width=\linewidth]{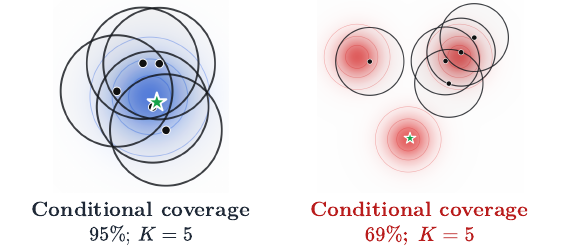}
\caption{Conventional approach (mean set size $6.69$).}
\label{fig:motivation-fixed}
\end{subfigure}
\vspace{0.3em}
\begin{subfigure}{\columnwidth}
\centering
\includegraphics[width=\linewidth]{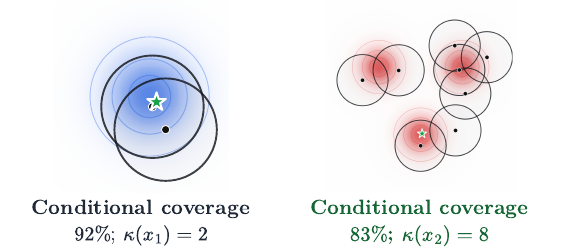}
\caption{Sample-efficient approach (mean set size $6.24$).}
\label{fig:motivation-adaptive}
\end{subfigure}
\caption{Motivation for adaptive sampling.
Moving three draws from $x_1$ to $x_2$ improves mode representation and balances conditional coverage at the same average budget.}
\label{fig:motivation}
\end{figure}

The key insight is that the marginal coverage benefit and set-size cost of an additional draw are both characterized by the same inclusion probability, which can be estimated unbiasedly from generator samples. We use these estimates to formulate sample allocation as a constrained optimization problem that minimizes expected set size subject to coverage and sampling-budget constraints. Its two-multiplier dual admits a near-linear-time solution and a natural shadow-price interpretation: an input receives another draw when its marginal coverage benefit offsets its volume and sampling costs. Independent conformal calibration preserves finite-sample marginal coverage, and the allocation can be combined with radius-adaptive methods such as RCP and CP$^2$ \citep{plassier2025rcp,plassier2025cp2}.

Our analysis explains why adapting the sample count can reduce prediction-set size. When too few samples are drawn from a multimodal distribution, an entire mode may be missed, forcing calibration to use a large radius that spans the gap between modes. Allocating more samples to such inputs makes it more likely that every relevant mode is represented, allowing a much smaller radius. This benefit remains even when the radius is optimally adapted to each input, showing that sample allocation provides gains that radius adaptation alone cannot achieve.

Empirically, across various synthetic benchmarks, \casa{} reduces mean set size by approximately $12\%$ relative to fixed-count PCP. Both maintain marginal coverage and reduce conditional-coverage error. On Porto taxi trajectories, \casa{} allocates more draws near junctions and fewer along straight roads, reducing mean set area by approximately $37\%$ with five draws per input on average. These gains persist at matched sampling costs and extend to several existing generative conformal methods.

The main contributions are summarized as follows: ($i$) We formulate adaptive sampling for generative conformal prediction as a resource-allocation problem. ($ii$) We derive an exact, near-linear-time algorithm and characterize efficiency gains beyond radius adaptation. ($iii$) Across synthetic and real tasks, we show that \casa{} reduces set size at comparable sampling costs, improves conditional-coverage balance, and complements existing generative conformal methods.

\paragraph{Related work.}
Conformal prediction provides uncertainty sets with finite-sample, distribution-free coverage under exchangeability \citep{vovk2005algorithmic,shafer2008tutorial}. Split conformal methods enable efficient calibration of modern predictors \citep{lei2018distribution,romano2019conformalized}, with extensions addressing distribution shift \citep{tibshirani2019conformal,barber2023conformal} and general risk control \citep{bates2021distribution,angelopoulos2024conformal,zhou2026calibrating}. Its scope now covers time series \citep{gibbs2021adaptive,xu2021conformal,zaffran2022adaptive,xu2024multidimensional}, functional outputs and operator models \citep{lei2015functional,diquigiovanni2022functional,harris2025locally}, and noisy, counterfactual, or latent targets \citep{einbinder2024label,alaa2023meta,javanmardi2023partial,zheng2026latent}. Other work moves beyond sets to calibrated predictive distributions, such as conformal flows \citep{harris2026flow}. Exact conditional coverage is unattainable without further assumptions \citep{barber2021limits}, which motivates locally adaptive scores \citep{lei2018distribution,romano2019conformalized} and group-conditional relaxations \citep{gibbs2025conformal}.

Conformal sets can also be built from estimates of the conditional distribution, such as CDFs \citep{chernozhukov2021distributional}, densities \citep{izbicki2022cd}, or generated samples. Among sample-based methods, PCP forms a union of balls around generated samples \citep{wang2023pcp}. Later work refines its score to improve conditional coverage, either by rescaling it with an input-dependent radius, as in RCP and CP$^2$ \citep{plassier2025rcp,plassier2025cp2}, or by mapping it through an estimated conditional distribution function, as in C-PCP \citep{dheur2025unified}. Others adapt the set's geometry to the structure of the samples, through clustering in CP4Gen \citep{yang2026cp4gen} or ranked generated regions \citep{zheng2024generative}. All of these methods draw the same number of samples at every input. For discrete outputs such as text, related methods adapt the number of samples to the input \citep{quach2024clm,shahrokhi2025gps,noorani2025cpq}; we address continuous responses, where sets are geometric and an additional sample changes both their coverage and their size.

The adaptive allocation of a sampling budget is a classical problem in statistics. Neyman allocation distributes samples across strata in proportion to their variability \citep{neyman1934two}, adaptive importance sampling concentrates draws where they most reduce estimator variance \citep{bugallo2017adaptive}, and best-arm identification allocates a fixed budget of pulls across arms \citep{audibert2010best}. In machine learning, adaptive computation methods assign more computation to harder inputs, from learned halting in recurrent networks \citep{graves2016adaptive} to test-time compute in language models, where repeated sampling improves accuracy and its optimal amount depends on problem difficulty \citep{wang2023self,snell2025scaling} and can be allocated per query \citep{damani2025hard,zuo2026strategic}. These methods target estimator variance, accuracy, or task reward, whereas \casa{} allocates generator samples to shrink calibrated uncertainty sets under a coverage constraint.

\section{Setup and Preliminaries}
\label{sec:setup}\label{sec:prelim}

Consider $(X,Y)\sim\mathcal{P}$, where $X\in\mathcal{X}$ and $Y\in\mathcal{Y}\subseteq\mathbb{R}^d$, and let $\mathcal{P}_x$ denote the conditional distribution of $Y$ given $X=x$. Let $\widehat f$ be a fixed predictive model, and let $S:\mathcal{X}\times\mathcal{Y}\to\mathbb{R}$ be a nonconformity score measuring the discrepancy between a candidate response $y$ and the model prediction at $x$, \eg, $S(x,y)=\|y-\widehat f(x)\|_2$.

Given a calibration set $\mathcal{D}_n=\{(X_i,Y_i)\}_{i=1}^n$, split conformal prediction computes $S_i=S(X_i,Y_i)$ and, for a target miscoverage level $\alpha\in(0,1)$, constructs
\begin{equation}
\widehat{\mathcal{C}}_\alpha(x)
=
\left\{
y\in\mathcal{Y}: S(x,y)\le \widehat q
\right\},
\label{eq:split}
\end{equation}
where
\[
\widehat q
=
Q\left(
S_{1:n};
\frac{\left\lceil (n+1)(1-\alpha)\right\rceil}{n}
\right),
\]
and $Q(S_{1:n};\tau)$ denotes the empirical $\tau$-quantile of $S_1,\ldots,S_n$, with the convention $Q(S_{1:n};\tau)=\infty$ for $\tau>1$. If the calibration observations and the test pair $(X_{\mathrm{test}},Y_{\mathrm{test}})$ are exchangeable, then
\begin{equation}
\mathbb{P}\left\{
Y_{\mathrm{test}}
\in
\widehat{\mathcal{C}}_\alpha(X_{\mathrm{test}})
\right\}
\ge 1-\alpha,
\label{eq:coverage}
\end{equation}
where the probability is taken jointly over the calibration data and the test observation.

The guarantee in \eqref{eq:coverage} holds for any choice of nonconformity score, while the score determines the informativeness of the resulting prediction sets. We focus on two properties: First is the \emph{efficiency}, measured by the expected set size $v \coloneqq \mathbb{E}|\widehat{\mathcal{C}}_\alpha(X)|$, where $|\cdot|$ denotes an appropriate notion of size, such as Lebesgue measure for continuous outcomes or cardinality for discrete outcomes; and second is the \emph{conditional coverage}, given by $c(x)\coloneqq\mathbb{P}\{Y\in\widehat{\mathcal{C}}_\alpha(x)|X=x\}$. The former rules out trivially valid but uninformative sets such as $\widehat{\mathcal{C}}_\alpha(x)=\mathcal{Y}$, while the latter captures heterogeneity in coverage across covariate values or subpopulations that is not controlled by the marginal guarantee in \eqref{eq:coverage} \citep{barber2021limits}.

Recently, \cite{wang2023pcp} has introduced probabilistic conformal prediction (PCP), which can substantially improve the efficiency of prediction sets. Specifically, suppose that $\widehat f$ is a conditional generative model from which, for any input $x$, we can draw independent samples $\widehat Y_1,\ldots,\widehat Y_K\overset{\mathrm{iid}}{\sim}\widehat f(x)$. The PCP measures the nonconformity of a candidate response $y$ by its distance to the nearest generated sample,
\begin{equation}
S(x,y)=
\frac{\min_{j\le K}\|y-\widehat Y_j\|_2}{r(x)},
\label{eq:gcp}
\end{equation}
leading to the prediction set
\begin{equation}
\widehat{\mathcal{C}}_\alpha(x)=
\bigcup_{j=1}^K
\mathcal{B}\!\left(\widehat Y_j,\widehat q\,r(x)\right),
\label{eq:pcpset}
\end{equation}
where $\mathcal{B}(y,r)$ denotes the Euclidean ball of radius $r$ centered at $y$, and $r(x)>0$ is an input-dependent scaling function that adapts the conformal radius.
For example, PCP sets $r(x)\equiv 1$ \citep{wang2023pcp}, whereas subsequent adaptive-radius approaches learn $r(x)$ from the input \citep{plassier2025rcp,plassier2025cp2}.

However, the efficiency of these constructions depends on having a sufficiently large number of samples $K$ at each input, which can become a computational bottleneck when sample generation is expensive. When $K$ is small, three limitations arise. ($i$) Increasing the conformal radius cannot compensate for missing support: if $\mathcal{P}_x$ is multimodal and all $K$ samples miss the mode containing the realized response, then $S(x,Y)$ is governed by the separation between modes, and any ball large enough to cover the response must also span the intervening low-density region, regardless of $r(x)$. ($ii$) Such misses propagate through calibration. Because $\widehat q$ is a global quantile of scores pooled across inputs, frequent misses inflate $\widehat q$ and enlarge prediction sets broadly, whereas rare misses can leave specific inputs under-covered, with $c(x)<1-\alpha$, compensated by over-coverage elsewhere. ($iii$) A uniform sampling budget can be inefficient: when $\mathcal{P}_x$ is highly concentrated, a few samples may already suffice to make $S(x,Y)$ small, so additional samples yield limited benefit relative to their computational cost. This suggests allocating samples adaptively across inputs, using fewer samples where $\mathcal{P}_x$ is concentrated and more where it is complex or multimodal. We formalize this allocation problem next.

\section{Methodology}
\label{sec:method}

\begin{figure*}[t]
\centering
\begin{subfigure}[t]{0.245\textwidth}
\centering
\includegraphics[width=\linewidth]{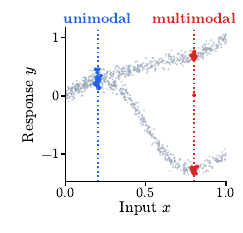}
\caption{Distribution $Y\mid X$}
\label{fig:mechanism-data}
\end{subfigure}\hfill
\begin{subfigure}[t]{0.245\textwidth}
\centering
\includegraphics[width=\linewidth]{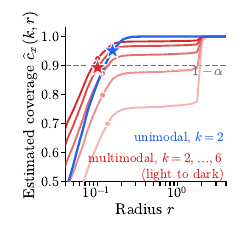}
\caption{Count--radius tradeoff}
\label{fig:mechanism-coverage}
\end{subfigure}\hfill
\begin{subfigure}[t]{0.245\textwidth}
\centering
\includegraphics[width=\linewidth]{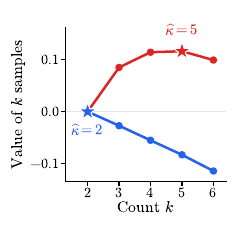}
\caption{Marginal value of count}
\label{fig:mechanism-value}
\end{subfigure}\hfill
\begin{subfigure}[t]{0.245\textwidth}
\centering
\includegraphics[width=\linewidth]{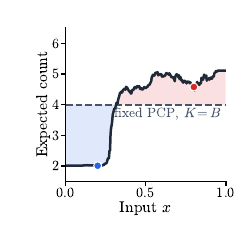}
\caption{Resulting allocation}
\label{fig:mechanism-alloc}
\end{subfigure}
\caption{Mechanism of \casa{} with average budget $B=4$ on a conditional distribution whose modes separate as $x$ grows. (b)~Dots mark the radius minimizing the priced cost $g$ in Theorem~\ref{thm:dual} for each count, and stars mark the selected count--radius pair. (c)~The value of $k$ samples is $-\min_r g_k(x;r)$ relative to $k=2$; each input takes the count of highest value. (d)~\casa{} moves samples from unimodal to multimodal inputs while keeping the same average budget as fixed PCP.}
\label{fig:mechanism}
\end{figure*}

\subsection{Resource Allocation Representation}
\label{sec:formulation}
\label{sec:procedure}
\label{sec:tradeoff}
\label{sec:allocation}
\label{sec:calibration}

The goal is to design an input-dependent sampling rule that allocates the sampling budget adaptively across inputs. Let $\kappa(x)\in\mathbb{N}$ denote the number of samples allocated to input $x$. We formulate the design of $\kappa$ as a resource-allocation problem that minimizes the expected prediction-set size subject to marginal coverage and an average sampling-budget constraint. Formally,
\begin{equation}
\begin{aligned}
\min_{\kappa(\cdot)}\quad
& \bE\;\bigl|\cC_{\kappa}(X)\bigr|
\coloneqq \bE\;\Bigl|\bigcup\nolimits_{j=1}^{\kappa(X)}
\mathcal{B}\!\left(\widehat Y_j\right)\Bigr| \\
\text{s.t.}\quad
& \bP\!\left\{Y\in\cC_{\kappa}(X)\right\}\ge 1-\alpha, \\
& \bE[\kappa(X)]\le B,
\end{aligned}
\label{eq:population}
\end{equation}
where $\widehat Y_1,\widehat Y_2,\ldots\overset{\mathrm{iid}}{\sim}\widehat f(X)$, and $B$ denotes the average sampling budget. Here, $\mathcal{B}(y)$ denotes a ball centered at $y$, whose radius may be either a global parameter $r$ or an input-dependent function $r(x)$. In principle, the radius and the sampling rule $\kappa$ can be jointly optimized. For clarity, we focus on optimizing sample allocation $\kappa$ and treat the radius specification separately.

For a given input $x$, define
\[
m(x;y)
=
\bP_{\widehat Y\sim\widehat f(x)}
\!\left\{\widehat Y\notin\mathcal B(y)\right\},
\]
the probability that a single draw from the conditional generative model does not fall within the ball $\mathcal B(y)$.
With $k$ independent draws, the probability that at least one draw falls within $\mathcal B(y)$, termed as \emph{inclusion probability}, is
\begin{equation}
h_k(x;y)=1-m(x;y)^k.
\label{eq:inclusion}
\end{equation}
Accordingly, define the conditional coverage probability and expected prediction-set size at input $x$ as
\[
c_k(x)
=
\int h_k(x;y)\,\d\mathcal P_x(y),\quad
v_k(x)
=
\int h_k(x;y)\,\d y,
\]
where the second integral is taken with respect to the underlying reference measure on the response space.
Thus, the marginal coverage and expected set size in \eqref{eq:population} can be written as $\bE[c_{\kappa(X)}(X)]$ and $\bE[v_{\kappa(X)}(X)]$.

A key observation is that the marginal increase in inclusion probability from one additional draw is
$
\Delta h_k=h_{k+1}-h_k.
$
Integrating this increment yields the corresponding marginal gain in coverage and marginal increase in expected set size. An additional draw is therefore valuable when it captures substantial response mass that is likely to be missed by the existing draws, for example, an underrepresented mode. Both marginal effects decrease with $k$, reflecting diminishing returns to additional sampling (Appendix~\ref{app:proofs}).

\begin{algorithm}[!t]
\caption{\casa{}}
\label{alg:allocation}
\begin{algorithmic}[1]
\makeatletter
\renewenvironment{ALC@g}{\begin{list}{\ALC@lno}{\itemsep\z@\itemindent\z@\listparindent\z@\rightmargin\z@\topsep\z@\partopsep\z@\parskip\z@\parsep\z@\leftmargin\algorithmicindent\addtolength{\ALC@tlm}{\leftmargin}}}{\end{list}}
\REQUIRE \parshape 2 \@totalleftmargin \linewidth \dimexpr\@totalleftmargin-\ALC@tlm\relax \dimexpr\linewidth+\ALC@tlm\relax model $\widehat f$, data $\mathcal D_m$, $\mathcal D_n$, candidate counts $\cK$, budget $B$, level $\alpha$, pool size $N$, radius $r(\cdot)$ (optional)
\makeatother
\FOR[Step 1]{$(X_i,Y_i)\in\mathcal D_m$}
\STATE Draw $\widehat Y_{i,1},\ldots,\widehat Y_{i,N}\sim\widehat f(X_i)$
\STATE Compute $\widehat c_k(X_i)$ and $\widehat v_k(X_i)$, $k\in\cK$, by \eqref{eq:perinput}
\ENDFOR
\STATE Regress $\widehat c_k(X_i)$, $\widehat v_k(X_i)$ on $X_i$ to obtain $\widehat c_k$, $\widehat v_k$
\STATE Solve dual optimization in \eqref{eq:dualLP} \COMMENT{Step 2}
\STATE Set $\widehat\kappa(x)\in\argmin_{k\in\cK}g_k(x)$
\FOR[Step 3]{$(X_i,Y_i)\in\mathcal D_n$}
\STATE Draw $\widehat Y_{i,1},\ldots,\widehat Y_{i,\widehat\kappa(X_i)}\sim\widehat f(X_i)$
\STATE Compute $S_i$ by \eqref{eq:gcp} with $K=\widehat\kappa(X_i)$
\ENDFOR
\STATE Compute $\widehat q$ by \eqref{eq:split}
\ENSURE $\widehat\cC_\alpha(X_{\mathrm{test}})=\bigcup_{j=1}^{\widehat\kappa(X_{\mathrm{test}})}\mathcal B\bigl(\widehat Y_j,\widehat q\,r(X_{\mathrm{test}})\bigr)$
\end{algorithmic}
\end{algorithm}

\subsection{Proposed Algorithm}

We develop \casa{} (Algorithm~\ref{alg:allocation}) that solves problem~\eqref{eq:population} in three steps: ($i$) Estimate $c_k$ and $v_k$ for each candidate count $k$ from generated samples; ($ii$) Plug the empirical estimates into \eqref{eq:population} and solve its dual form; and ($iii$) Calibrate the resulting uncertainty sets to guarantee finite-sample marginal coverage.

\paragraph{Step 1: Estimate coverage and size.} Given an independent dataset $\mathcal D_m$, for each $(X_i,Y_i)\in\mathcal D_m$, we first draw $N\ge k$ predictions
\[
\widehat Y_{i,1},\ldots,\widehat Y_{i,N}\overset{\mathrm{iid}}{\sim}\widehat f(X_i).
\]
These $N$ predictions are generated only once during the offline estimation stage and can be reused to evaluate all candidate values of $k$, so this step incurs no additional sampling cost at testing phase.

Next we consider drawing $k$ samples uniformly without replacement from these $N$ predictions. Let
\[
H(X_i;y)
=
\sum_{j=1}^N
\one\left\{\|\widehat Y_{i,j}-y\|_2\le r\right\}
\]
denote the number of predictions whose radius-$r$ balls contain $y$. Since the number of such predictions selected among the $k$ draws follows a hypergeometric distribution, the inclusion probability in \eqref{eq:inclusion} admits the following unbiased estimator
\begin{equation}
\widehat h_k(X_i; y)
=
1-
\binom{N-H(X_i;y)}{k}
\Big/
\binom{N}{k}.
\label{eq:rb}
\end{equation}
See Proposition~\ref{prop:rb} in Appendix~\ref{app:proofs} for details.

Lastly, we estimate the conditional coverage and set size at each $X_i$ by
\begin{equation}
\widehat c_k(X_i)
=
\widehat h_k(X_i;Y_i),
\quad
\widehat v_k(X_i)
=
\int \widehat h_k(X_i;y)\,\d y.
\label{eq:perinput}
\end{equation}
To predict these two quantities at a new input $X_{\mathrm{test}}$, we fit regression models for $c_k(x)$ and $v_k(x)$ using $\{(X_i,\widehat c_k(X_i),\widehat v_k(X_i))\}_{i=1}^m$. The fitted models can then be evaluated at $X_{\mathrm{test}}$ to obtain the corresponding coverage and size estimates. Appendix~\ref{app:implementation} provides implementation details and extensions to more radius-adaptive constructions.

\paragraph{Step 2: Reformulation.}
Replacing $c_k$ and $v_k$ in \eqref{eq:population} by $\widehat c_k$ and $\widehat v_k$ and expectations by averages over $\mathcal D_m$, and restricting the counts to a finite candidate set $\cK\subseteq\mathbb N$, gives its empirical representation, which can be expressed as a linear program (Appendix~\ref{app:dual}). The empirical problem has $m|\cK|$ variables, and interior-point methods solve it in $O\bigl((m|\cK|)^{3.5}\bigr)$ in the worst case \citep{karmarkar1984new}, so its cost grows polynomially and quickly. Empirically, with $m=5{,}000$ and $|\cK|=10$, a standard solver takes about $11$ seconds on a single CPU core, and about $43$ seconds at $m=10{,}000$. This motivates a more efficient solution based on the following exact dual of the empirical problem.
\begin{theorem}[Dual reformulation]
\label{thm:dual}
Suppose the empirical problem, over randomized rules $\kappa$, is feasible. Then it is equivalent to
\begin{equation}
\begin{aligned}
\max_{\mu,\lambda\ge0,\;u\in\bR^m}\;&\mu(1-\alpha)-\lambda B+\frac1m\sum_{i=1}^m u_i\\
\mathrm{s.t.}\;&u_i\le g_k(X_i),\quad i=1,\ldots,m,\ k\in\cK,
\end{aligned}
\label{eq:dualLP}
\end{equation}
where $g_k(x)=\widehat v_k(x)-\mu\,\widehat c_k(x)+\lambda k$, $\mu,\lambda\ge0$ are the dual variables for the constraints, and $\cK\subseteq\mathbb N$ is a finite set of candidate counts, \eg, $\{1,\ldots,2B\}$.
\end{theorem}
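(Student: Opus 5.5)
We prove the theorem by LP duality. First we write the empirical problem over randomized rules explicitly as a linear program. A randomized rule on the sample $X_1,\ldots,X_m$ is a collection of probabilities $p_{ik}\ge 0$, $k\in\cK$, with $\sum_{k\in\cK}p_{ik}=1$ for each $i$. Here $p_{ik}$ is the probability that input $X_i$ receives $k$ draws. The empirical problem is then
\[
\min_{p\ge0}\ \frac1m\sum_{i=1}^m\sum_{k\in\cK}p_{ik}\widehat v_k(X_i)
\quad\text{s.t.}\quad
\frac1m\sum_{i,k}p_{ik}\widehat c_k(X_i)\ge 1-\alpha,\quad
\frac1m\sum_{i,k}p_{ik}\,k\le B,\quad
\sum_{k}p_{ik}=1 .
\]
Its objective and constraints are linear in $p$, so it has $m|\cK|$ variables. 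Randomization is what makes the feasible set the convex polytope above, rather than its integer vertices. The polytope is compact because it lies inside a product of simplices.

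Next we form the Lagrangian. We attach multiplier $\mu\ge0$ to the coverage constraint and $\lambda\ge0$ to the budget constraint. We attach a free multiplier $u_i/m$ to each normalization constraint; the scaling by $1/m$ makes the dual objective come out as $\frac1m\sum_i u_i$. Collecting terms in $p_{ik}$ gives
\[
L=\mu(1-\alpha)-\lambda B+\frac1m\sum_i u_i+\frac1m\sum_{i,k}p_{ik}\bigl(g_k(X_i)-u_i\bigr),
\qquad g_k=\widehat v_k-\mu\widehat c_k+\lambda k .
\]
We minimize $L$ over $p\ge0$. The infimum is $-\infty$ unless $g_k(X_i)-u_i\ge0$ for all $i,k$. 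When all these coefficients are nonnegative, the infimum is attained at $p=0$ and equals the stated dual objective. The Lagrangian dual is therefore exactly \eqref{eq:dualLP}.

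Finally, we establish equivalence, meaning equal optimal values, with primal solutions recoverable from dual ones. Since the primal is a feasible LP and its objective is bounded below (the region is compact), LP strong duality gives zero duality gap and attainment on both sides. No Slater condition is needed because all constraints are linear. Complementary slackness then characterizes the primal solutions: $p_{ik}>0$ only if $u_i=g_k(X_i)=\min_{k'}g_{k'}(X_i)$. This justifies the rule $\widehat\kappa(x)\in\argmin_k g_k(x)$, with randomization needed only at ties. We also note that at the optimum $u_i=\min_k g_k(X_i)$. This reduces the dual to a concave maximization in the two variables $(\mu,\lambda)$.

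The main obstacle is mild. It is mainly to state carefully that "equivalent" means equal value together with recovery of the primal optimum via complementary slackness. It also requires confirming that feasibility of the randomized problem alone suffices for strong duality. It does, because the problem is a finite-dimensional LP.
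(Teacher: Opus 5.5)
Your proposal is correct and matches the paper's argument: both write the empirical problem over randomized rules as a feasible LP on a bounded polytope and invoke LP strong duality, and both use complementary slackness to restrict optimal allocations to minimizers of $g_k(X_i)$. The only cosmetic difference is where $u_i$ comes from. The paper dualizes just the coverage and budget constraints and minimizes over each simplex to get $D(\mu,\lambda)$ with the term $\min_k g_k(X_i)$, then introduces $u_i$ as epigraph variables; you obtain $u_i$ directly as free multipliers of the normalization constraints.
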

The proof is deferred to Appendix~\ref{app:dual}.

The dual is cheaper to solve because it decouples the inputs. The primal couples all $m$ inputs through the coverage and budget constraints, whereas for fixed dual variables $(\mu,\lambda)$ the dual problem separates into $m$ independent subproblems, $\min_{k\in\cK}g_k(X_i)$. Solving the dual therefore reduces to a search over two dual variables. Theoretically, for a fixed $\mu$, the dual is solved in $O\bigl(m|\cK|\log(m|\cK|)\bigr)$, which is near-linear, compared with $O\bigl((m|\cK|)^{3.5}\bigr)$ for the primal. Empirically, in the same setting, the dual takes about $0.17$ seconds at $m=5{,}000$ and $0.4$ seconds at $m=10{,}000$, compared with $11$ and $43$ seconds for the primal.

\paragraph{Step 3: Calibration.}
Finally, we calibrate the learned allocation rule $\widehat\kappa$ on $\mathcal D_n$. For each $(X_i,Y_i)\in\mathcal D_n$, we draw $\widehat\kappa(X_i)$ samples from the generator and compute the score in \eqref{eq:gcp} using $ K = \widehat\kappa(X_i)$. We then obtain the conformal quantile $\widehat q$ from \eqref{eq:split}. At a test input $x$, we draw $\widehat\kappa(x)$ samples and return the set \eqref{eq:pcpset}.
\begin{theorem}[Finite-sample marginal coverage]
\label{thm:validity}
Suppose that the generator $\widehat f$ and the learned $\widehat\kappa$ are independent of $\mathcal D_n$ and the test pair, and that $D_n$ and test pair are exchangeable. Then
\[
\bP\{Y_{\mathrm{test}}\in
\widehat\cC_\alpha(X_{\mathrm{test}})\}
\ge 1-\alpha.
\]
\end{theorem}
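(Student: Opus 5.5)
The plan is to reduce the claim to the standard split-conformal guarantee \eqref{eq:coverage}. The idea is to treat the whole procedure that produces a score from an input--response pair as a single fixed, possibly randomized, score function. Conditional on $\widehat f$, $\widehat\kappa$ and $r(\cdot)$, all of which are independent of $\mathcal D_n$ and the test pair, I would augment each of the $n+1$ points with its own generator randomness. For point $i$, let $\xi_i=(\widehat Y_{i,1},\widehat Y_{i,2},\ldots)$ be an infinite iid sequence from $\widehat f(X_i)$, drawn fresh and independently across points. I would realize the generator as $\widehat Y_{i,j}=G(X_i,U_{i,j})$ with $U_{i,j}$ iid uniform and independent of the data. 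The score is then
\[
S_i=\tilde S(X_i,Y_i,U_i)=\min_{j\le\widehat\kappa(X_i)}\|Y_i-G(X_i,U_{i,j})\|_2/r(X_i),
\]
with $\tilde S$ a fixed deterministic map. If $\widehat\kappa$ is randomized, its randomization variable can also be folded into $U_i$.

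The first key step is exchangeability of the triples. Since $(X_i,Y_i)_{i\le n+1}$ are exchangeable and $(U_i)$ are iid and independent of them, the triples $(X_i,Y_i,U_i)$ are exchangeable. It follows that the scores $S_1,\dots,S_n,S_{n+1}$ are exchangeable, with $S_{n+1}=S(X_{\mathrm{test}},Y_{\mathrm{test}})$ computed from the test samples actually used to build $\widehat\cC_\alpha$.

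The second step is the equivalence between coverage and the score bound. By the definition of the set \eqref{eq:pcpset} with $K=\widehat\kappa(X_{\mathrm{test}})$, we have $Y_{\mathrm{test}}\in\widehat\cC_\alpha(X_{\mathrm{test}})$ iff some $j$ satisfies $\|Y_{\mathrm{test}}-\widehat Y_j\|_2\le\widehat q\,r(X_{\mathrm{test}})$. Using $r>0$, this holds iff $S_{n+1}\le\widehat q$.

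The third step is the usual quantile lemma: for exchangeable $S_1,\dots,S_{n+1}$, the rank of $S_{n+1}$ gives $\bP\{S_{n+1}\le Q(S_{1:n};\lceil(n+1)(1-\alpha)\rceil/n)\}\ge1-\alpha$. Ties are handled since the inequality is non-strict, and the convention $Q=\infty$ covers the case $\tau>1$. Finally, I would integrate out the conditioning on $\widehat f,\widehat\kappa,r$ to obtain the unconditional statement.

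The main obstacle is the first step: making precise that the variable sample count does not break exchangeability. The count depends on $X_i$, so the number of draws differs across points. Predrawing an infinite sequence per point and using only the first $\widehat\kappa(X_i)$ resolves this, because the score map is the same for every index. Independence of $\widehat\kappa$ from $\mathcal D_n$ is essential here; it is guaranteed because $\widehat\kappa$ is fit on $\mathcal D_m$ only.
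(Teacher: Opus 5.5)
Your proposal is correct and follows essentially the same argument as the paper. The paper conditions on the fitted quantities and attaches an independent seed to each calibration and test pair, covering both the possibly randomized count $\widehat\kappa(X)$ and the generated samples. The score then becomes a single fixed measurable map, so the scores are exchangeable; coverage is identified with $S_{n+1}\le\widehat q$, the order-statistic argument is applied, and the conditioning is averaged out.
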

The proof is deferred to Appendix~\ref{app:proofs}.

\subsection{Theoretical Analysis}
\label{sec:theory}

\label{sec:conditional}

We next characterize when adaptive allocation improves over uniform allocation under well-separated modes (Assumption~\ref{ass:modes}). This setting isolates the effect of missed modes and makes the marginal value of each additional sample explicit; Appendix~\ref{app:dual} gives a distribution-free characterization of the gain. All proofs are deferred to Appendix~\ref{app:theory}.

\begin{assumption}[Separated modes]
\label{ass:modes}
There exist $\epsilon>0$ and $R>1$ such that, for $x\in\mathcal X$, both $\mathcal P_x$ and $\widehat f(x)$ are supported on $\bigcup_{j=1}^{J(x)}\mathcal B(z_j(x),\epsilon)$, whose centers satisfy
\[
\min_{i\ne j}\|z_i(x)-z_j(x)\|_2\ge2(R+1)\epsilon.
\]
\end{assumption}
Assumption~\ref{ass:modes} states that $\mathcal P_x$ has $J(x)$ modes, each covered by a ball of radius $\epsilon$, with mode width at most $2\epsilon$. Each mode is separated from the others by gaps of at least $2R\epsilon$. Here $R$ can be understood as the ratio of the gap $2R\epsilon$ to the mode width $2\epsilon$.

We do not require $\widehat f(x)$ to be consistent with $\mathcal P_x$: the generator only needs to be supported on the same balls, and may weight them differently or even miss some modes of $\mathcal P_x$. Let
\[
p_j(x)=\mathcal P_x\{\mathcal B(z_j(x),\epsilon)\},\quad \widehat p_j(x)=\widehat f(x)\{\mathcal B(z_j(x),\epsilon)\}
\]
denote the probabilities of the $j$-th ball under $\mathcal P_x$ and $\widehat f(x)$, respectively. Then for any radius $r\in[2\epsilon,2R\epsilon)$, the conditional coverage does not depend on $r$ (Appendix~\ref{app:theory}):
\begin{equation}
c_k(x)=1-\sum_{j=1}^{J(x)}p_j(x)\{1-\widehat p_j(x)\}^k.
\label{eq:modecov}
\end{equation}

\begin{theorem}[Efficiency gain at equal budget]
\label{thm:threshold}
Under Assumption~\ref{ass:modes}, let every count rule use the smallest constant radius attaining marginal coverage $1-\alpha$. For any budget $B\in\cK$ satisfying
\[
 \bE[c_B(X)]<1-\alpha\le\max_{\bE[\kappa(X)]\le B}\bE[c_{\kappa(X)}(X)],
\]
the expected set sizes $V_A$ and $V_F$ of the adaptive and uniform allocation, respectively, satisfy
\[
 \frac{V_A}{V_F}\le\frac{B}{R^d}.
\]
\end{theorem}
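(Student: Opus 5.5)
Under Assumption~\ref{ass:modes}, the plan is to compare the two allocations through the single constant radius each one needs, which is forced to lie in one of two regimes. Balls of radius $r<2R\epsilon$ around a sample in mode $j$ can only contain points of mode $j$: any other mode is at distance at least $2(R+1)\epsilon-2\epsilon=2R\epsilon$. So for $r\in[2\epsilon,2R\epsilon)$ the coverage at $x$ is exactly \eqref{eq:modecov}, and by monotonicity it is the best coverage achievable with any $r<2R\epsilon$. A rule $\kappa$ can therefore reach marginal coverage $1-\alpha$ with some $r<2R\epsilon$ if and only if $\bE[c_{\kappa(X)}(X)]\ge1-\alpha$. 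When it can, the smallest such radius is at most $2\epsilon$, because radius $2\epsilon$ already attains \eqref{eq:modecov}.

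First, the uniform rule $\kappa\equiv B$. By hypothesis $\bE[c_B(X)]<1-\alpha$, so no radius below $2R\epsilon$ suffices, and the uniform rule needs $r_F\ge2R\epsilon$. Its set contains at least one ball of radius $r_F$ at every input, so
\[
V_F\ge\omega_d(2R\epsilon)^d,
\]
where $\omega_d$ is the volume of the unit ball.

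Second, the adaptive rule. By the right-hand inequality of the hypothesis, some rule with $\bE[\kappa(X)]\le B$ satisfies $\bE[c_{\kappa(X)}(X)]\ge1-\alpha$. If the maximum is not attained over deterministic rules, I would take it over the randomized rules of Theorem~\ref{thm:dual}; linearity keeps the budget and coverage expressions valid. This rule works with radius $r_A\le2\epsilon$. Its set is a union of at most $\kappa(x)$ balls, so by a union bound its expected volume is at most $\kappa(x)\,\omega_d(2\epsilon)^d$. Averaging over $X$ gives
\[
V_A\le\bE[\kappa(X)]\,\omega_d(2\epsilon)^d\le B\,\omega_d(2\epsilon)^d.
\]
If $V_A$ denotes the size of the optimal adaptive rule from \eqref{eq:population}, it is at most this bound. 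Dividing the two bounds gives $V_A/V_F\le B(2\epsilon)^d/(2R\epsilon)^d=B/R^d$.

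The main obstacle is the radius claim for the uniform rule: showing that every $r<2R\epsilon$ gives coverage no larger than \eqref{eq:modecov}. For $r<2\epsilon$ the coverage is only smaller, by monotonicity in $r$. For $r\in[2\epsilon,2R\epsilon)$ it is exact, because a ball around a sample in one mode cannot reach any other mode. The subtle point is the case where the generator puts zero mass on some mode: a sample can then never land there, so that mode's mass is lost for every $r<2R\epsilon$. This is precisely the missed-mode effect that \eqref{eq:modecov} already encodes through the factor $\{1-\widehat p_j(x)\}^k=1$. I would also check that "smallest constant radius attaining coverage" is well defined, using right-continuity of coverage in $r$ (balls are closed), so that the infimum is attained.
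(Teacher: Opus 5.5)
Your proposal is correct and follows essentially the same route as the paper's proof. Both rest on the dichotomy that the smallest constant radius is at most $2\epsilon$ when $\bE[c_{\kappa(X)}(X)]\ge1-\alpha$ and at least $2R\epsilon$ otherwise, which gives $V_F\ge\omega_d(2R\epsilon)^d$ and, through a feasible witness rule, $V_A\le B\,\omega_d(2\epsilon)^d$; your added remarks on randomized rules, modes the generator misses, and right-continuity are harmless refinements.
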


Theorem~\ref{thm:threshold} shows that, at the same budget, moving samples from easy to hard inputs can shrink prediction sets by a factor of at least $R^d/B$. Intuitively, a missed mode can be covered in two ways: by drawing more samples, or by enlarging the radius until it bridges the gap. Uniform allocation is forced into the latter at hard inputs, and since the radius is shared, every input pays for it. Adaptive allocation instead spends the samples saved at easy inputs, \eg, unimodal ones covered by a single sample, on the modes of hard inputs, so every input keeps a ball of the mode's size. The gain is therefore largest when modes are well separated and inputs differ in how many samples they need.

Adaptive allocation also saves samples for conditional coverage. For a constant radius $r\in[2\epsilon,2R\epsilon)$, let $k_\alpha(x)=\min\{k\in\cK:c_k(x)\ge1-\alpha\}$ be the number of samples that input $x$ needs.

\begin{corollary}[Sampling savings for conditional coverage]
\label{cor:conditional}
Under Assumption~\ref{ass:modes}, to attain conditional coverage $1-\alpha$ at every input, adaptive allocation saves, relative to uniform allocation, the following number of samples per input:
\[
 \sup_xk_\alpha(x)-\bE[k_\alpha(X)].
\]
\end{corollary}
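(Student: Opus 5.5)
The plan is to fix a constant radius $r\in[2\epsilon,2R\epsilon)$ and use \eqref{eq:modecov}. That display gives $c_k(x)=1-\sum_j p_j(x)\{1-\widehat p_j(x)\}^k$, which does not depend on $r$ and is nondecreasing in $k$, since each term $\{1-\widehat p_j\}^k$ is nonincreasing. Monotonicity makes $k_\alpha(x)$ well defined whenever some $k\in\cK$ attains $1-\alpha$. It also shows that, for each input, the set of counts achieving conditional coverage $1-\alpha$ is exactly $\{k\in\cK:k\ge k_\alpha(x)\}$. The first step is to record this characterization: a count rule $\kappa$ attains conditional coverage $1-\alpha$ at every input if and only if $\kappa(x)\ge k_\alpha(x)$ for all $x$. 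For randomized rules the requirement becomes that the mixture $\bE[c_{\kappa(x)}(x)]$ is at least $1-\alpha$.

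The second step is to compute the two minimal costs. For uniform allocation $\kappa\equiv K$, the constraint forces $K\ge k_\alpha(x)$ for every $x$, so the smallest feasible constant is $K^\star=\sup_x k_\alpha(x)$. This is a maximum because $\cK$ is finite. The cost per input is then $\sup_x k_\alpha(x)$. For adaptive allocation, the pointwise lower bound $\kappa(x)\ge k_\alpha(x)$ gives $\bE[\kappa(X)]\ge\bE[k_\alpha(X)]$, and the rule $\kappa=k_\alpha$ is feasible and attains this bound. Subtracting the two costs yields the claimed savings $\sup_x k_\alpha(x)-\bE[k_\alpha(X)]$.

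The one subtle point is the treatment of randomized rules in the adaptive case. Randomizing between counts below and above $k_\alpha(x)$ could reach coverage exactly $1-\alpha$ with an expected count strictly below $k_\alpha(x)$. I would handle this by interpreting the corollary as a statement about deterministic count rules, matching the definition of $k_\alpha$ as an integer. Alternatively, one can note that conditional coverage is required given $X=x$ for the realized set, which excludes fractional mixing. If the randomized version is wanted, the same argument shows the savings is at least the stated quantity minus the rounding gap. The remaining facts, measurability of $k_\alpha$ and the independence of $c_k$ from $r$ within $[2\epsilon,2R\epsilon)$, come directly from Assumption~\ref{ass:modes} and \eqref{eq:modecov}.
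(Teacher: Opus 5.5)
Your proposal is correct and takes essentially the same route as the paper. You fix a radius in $[2\epsilon,2R\epsilon)$, use monotonicity of $c_k(x)$ in $k$ to reduce conditional coverage to $\kappa(x)\ge k_\alpha(x)$, and compare the uniform cost $\sup_x k_\alpha(x)$ with the deterministic adaptive optimum $\bE[k_\alpha(X)]$ attained by $\kappa=k_\alpha$. The paper also restricts the adaptive minimum to deterministic rules; it differs only cosmetically by reading ``every input'' as almost every input, so its suprema are essential suprema.
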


Corollary~\ref{cor:conditional} shows that conditional coverage costs uniform allocation the worst-case number of samples, but adaptive allocation only the average. Intuitively, each mode costs about $\log(1/\alpha)$ samples to cover, so uniform allocation spends the budget of the most multimodal input everywhere. The next result shows that even input-dependent radii cannot close this gap.
\begin{figure*}[!t]
\centering
\includegraphics[width=\textwidth]{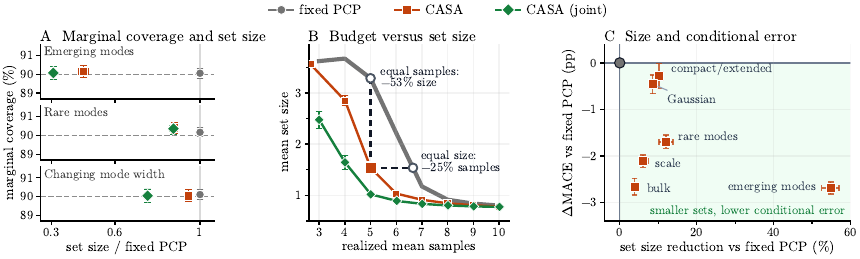}
\caption{Synthetic results on a subset of DGPs with a $90\%$ coverage target. (A)~Marginal coverage and set size relative to fixed PCP at $B=5$. (B)~Set size against realized mean samples with $B=3,\ldots,10$ on the emerging-modes DGP; open circles mark fixed PCP at equal samples and at equal size. (C)~Size reduction and MACE change of \casa{} at $B=5$. Error bars show 95\% intervals across $100$ independent runs.}
\label{fig:frontiers}
\end{figure*}
\begin{proposition}[Efficiency beyond oracle radii]
\label{prop:oracle}
Under Assumption~\ref{ass:modes}, let each input use the smallest radius attaining conditional coverage $1-\alpha$. For any budget $B\in\cK$ with $\bE[k_\alpha(X)]\le B<\sup_xk_\alpha(x)$, the expected set sizes $V_A$ and $V_F$ of the adaptive and uniform allocation, respectively, satisfy
\[
 \frac{V_A}{V_F}\le\frac{B}{\bP\{k_\alpha(X)>B\}\,R^d}.
\]
\end{proposition}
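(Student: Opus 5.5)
The plan is to bound $V_A$ from above and $V_F$ from below separately, then take the ratio.

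\emph{Upper bound on $V_A$.} I would use the adaptive rule $\kappa(x)=k_\alpha(x)$. Since $\bE[k_\alpha(X)]\le B$, it respects the budget. Under this rule every input attains conditional coverage $1-\alpha$ with a radius $r\le 2\epsilon$. The justification is \eqref{eq:modecov}: for any $r\in[2\epsilon,2R\epsilon)$, a ball around a sample in mode $j$ covers all of $\mathcal B(z_j(x),\epsilon)$, so $c_{k_\alpha(x)}(x)\ge1-\alpha$ already holds at $r=2\epsilon$. Hence the smallest per-input radius attaining conditional coverage is at most $2\epsilon$. The set is then a union of at most $k_\alpha(x)$ balls of radius at most $2\epsilon$, giving
\[
|\cC(x)|\le k_\alpha(x)\,\omega_d(2\epsilon)^d,
\]
where $\omega_d$ is the unit-ball volume. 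Taking expectations yields $V_A\le \bE[k_\alpha(X)]\,\omega_d(2\epsilon)^d\le B\,\omega_d(2\epsilon)^d$. The optimal adaptive allocation can only do better.

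\emph{Lower bound on $V_F$.} Uniform allocation uses $K=B$ samples at every input. Consider an input with $k_\alpha(x)>B$. For radii $r<2R\epsilon$, I would show the coverage is at most $c_B(x)$ as given by \eqref{eq:modecov}, which is strictly below $1-\alpha$ by definition of $k_\alpha$. Two sub-steps are needed:
\begin{itemize}
\item[(i)] For $r<2\epsilon$, coverage is monotone in $r$, so it is bounded by the value at $2\epsilon$.
\item[(ii)] For $r\in[2\epsilon,2R\epsilon)$, the gap condition $\|z_i-z_j\|\ge 2(R+1)\epsilon$ means a ball of radius $r$ centred in mode $i$ cannot reach any point of mode $j$. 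Such a point lies at distance at least $2(R+1)\epsilon-2\epsilon=2R\epsilon>r$.
\end{itemize}
Coverage is therefore exactly $c_B(x)$ on this range. It follows that the smallest radius attaining conditional coverage at such $x$ satisfies $r(x)\ge 2R\epsilon$.

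The set contains at least one ball of radius $r(x)$, so $|\cC(x)|\ge\omega_d(2R\epsilon)^d$. Restricting the expectation to the event $\{k_\alpha(X)>B\}$ and dropping all other inputs (sizes are nonnegative) gives
\[
V_F\ge \bP\{k_\alpha(X)>B\}\,\omega_d(2R\epsilon)^d.
\]
This probability is positive because $B<\sup_x k_\alpha(x)$ and $k_\alpha$ takes values in the discrete set $\cK$. If the supremum is approached only on a null set, I would note that the bound is vacuous and still true.

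\emph{Ratio and main obstacle.} Dividing the two bounds gives
\[
\frac{V_A}{V_F}\le\frac{B}{\bP\{k_\alpha(X)>B\}\,R^d}.
\]
The delicate step is the lower bound on $V_F$. It requires showing that no radius in $[2\epsilon,2R\epsilon)$, and no smaller radius, restores coverage, so the minimal radius must jump to at least $2R\epsilon$. I would handle the possibility that coverage attains $1-\alpha$ exactly at some boundary radius by noting that coverage is constant at $c_B(x)<1-\alpha$ throughout $[2\epsilon,2R\epsilon)$. Hence any attaining radius must be $\ge2R\epsilon$, since the infimum over attaining radii is bounded below by $2R\epsilon$. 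A secondary care point is the convention that a "missed mode" occurs with $\widehat p_j<1$, or $\widehat p_j=0$ for modes the generator ignores. Formula \eqref{eq:modecov} already covers both cases.
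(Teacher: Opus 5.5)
Your proposal is correct and follows the paper's proof essentially step for step. The rule $\kappa=k_\alpha$ gives $V_A\le B\,\omega_d(2\epsilon)^d$, and at inputs with $k_\alpha(x)>B$ every radius below $2R\epsilon$ yields coverage at most $c_B(x)<1-\alpha$, which forces a ball of radius at least $2R\epsilon$ and hence $V_F\ge\bP\{k_\alpha(X)>B\}\,\omega_d(2R\epsilon)^d$. Your side remark about the null-set case corresponds to the paper's convention, stated in the proof of Corollary~\ref{cor:conditional}, that suprema over inputs are essential suprema; under that convention $\bP\{k_\alpha(X)>B\}>0$ holds automatically.
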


Proposition~\ref{prop:oracle} shows that a radius cannot replace a sample. Below $k_\alpha(x)$ samples, even the best radius at input $x$ must bridge a gap, so uniform allocation pays a ball of radius at least $2R\epsilon$ at every input with $k_\alpha(x)>B$, whereas adaptive allocation avoids it. Thus, radius adaptation and count allocation are complementary. Once every mode is covered, the remaining gain is of order $1/B$ (Appendix~\ref{app:efficiency}).

\section{Experiments}
\label{sec:experiments}

\textbf{Baselines and metrics.} We evaluate \casa{} on synthetic data-generating processes (DGPs) and on real-world datasets.
We compare with PCP \citep{wang2023pcp} using $K=B$ samples and one calibrated radius (\textbf{fixed PCP}), and consider two variants of \casa{}: \textbf{\casa{}} adapts only the count, and \textbf{\casa{} (joint)} adapts both the count and the radius. We also apply the count allocation of \casa{} to other generative conformal methods that adapt the radius or shape of the set: CP$^2$ \citep{plassier2025cp2}, RCP \citep{plassier2025rcp}, CP4Gen \citep{yang2026cp4gen}, and C-PCP \citep{dheur2025unified}. All methods use the same generator and the same labeled data $\mathcal D_m\cup\mathcal D_n$: \casa{} uses $\mathcal D_m$ for allocation and $\mathcal D_n$ for calibration, whereas fixed PCP calibrates on both. We report marginal coverage, mean uncertainty set size, realized mean number of samples, and conditional coverage error, measured as the mean absolute deviation of coverage from $1-\alpha$ over input cells (MACE). The target coverage is $90\%$. Implementation details and further comparisons are in Appendix~\ref{app:experiments}.

\begin{figure}[!t]
\centering
\includegraphics[width=\columnwidth]{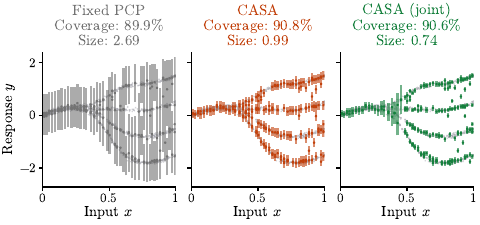}
\caption{Calibrated prediction sets on a synthetic DGP with $B=5$. Gray points are test pairs; at each displayed input, colored segments are the set and dots the generated samples.}
\label{fig:sets}
\end{figure}
\textbf{Synthetic benchmark.} In Figure~\ref{fig:frontiers}, we first compare \casa{} with fixed PCP on synthetic DGPs that isolate multimodality, heavy tails, and compact-to-extended responses. Against fixed PCP, both variants of \casa{} maintain marginal validity, produce smaller sets on most DGPs, and lower conditional coverage error on most DGPs without raising it on any. Panel~B shows that the gain holds in two directions on the emerging-modes DGP: at a fixed budget, \casa{} produces sets about $53\%$ smaller, and at the same set size, it uses about $25\%$ fewer samples. Figure~\ref{fig:sets} shows the calibrated sets on this DGP: fixed PCP's sets bridge the gaps between modes, whereas those of \casa{} stay at the scale of a mode, as Theorem~\ref{thm:threshold} predicts.

\begin{table*}[!t]
\centering\small
\caption{\casa{} combined with other generative conformal methods at $B=5$ and a $90\%$ coverage target. Arrows compare each method's fixed count with \casa{} count allocation. Size: mean set size; $\Delta$: paired geometric-mean size change; MACE in percentage points. Bold/$^\dagger$: significant improvement/deterioration (paired $95\%$ interval). $^\ddagger$C-PCP uses 20 extra auxiliary draws. Details are in Appendix~\ref{app:planned}.}
\label{tab:baselines}
\setlength{\tabcolsep}{3pt}
\begin{tabular}{lccc@{\hspace{8pt}}ccc@{\hspace{8pt}}ccc}\toprule
& \multicolumn{3}{c}{Rare modes} & \multicolumn{3}{c}{Emerging modes} & \multicolumn{3}{c}{Compact to extended} \\
\cmidrule(lr){2-4}\cmidrule(lr){5-7}\cmidrule(lr){8-10}
Method & Size & $\Delta$(\%) & MACE & Size & $\Delta$(\%) & MACE & Size & $\Delta$(\%) & MACE \\
\midrule
PCP & 1.31$\mkern-1mu\to\mkern-1mu$1.15 & \textbf{$-$12.0} & 14.6$\mkern-1mu\to\mkern-1mu$\textbf{12.9} & 3.29$\mkern-1mu\to\mkern-1mu$1.54 & \textbf{$-$54.9} & 10.0$\mkern-1mu\to\mkern-1mu$\textbf{7.4} & 2.73$\mkern-1mu\to\mkern-1mu$2.45 & \textbf{$-$10.2} & 9.6$\mkern-1mu\to\mkern-1mu$\textbf{9.3} \\
CP$^2$ & 2.97$\mkern-1mu\to\mkern-1mu$2.63 & \textbf{$-$11.5} & 1.6$\mkern-1mu\to\mkern-1mu$2.3$^\dagger$ & 4.90$\mkern-1mu\to\mkern-1mu$4.13 & \textbf{$-$16.2} & 2.9$\mkern-1mu\to\mkern-1mu$3.9$^\dagger$ & 3.68$\mkern-1mu\to\mkern-1mu$3.41 & \textbf{$-$7.6} & 1.5$\mkern-1mu\to\mkern-1mu$2.1$^\dagger$ \\
RCP & 2.00$\mkern-1mu\to\mkern-1mu$1.86 & \textbf{$-$7.2} & 3.2$\mkern-1mu\to\mkern-1mu$\textbf{2.3} & 2.80$\mkern-1mu\to\mkern-1mu$2.49 & \textbf{$-$11.9} & 3.9$\mkern-1mu\to\mkern-1mu$\textbf{2.5} & 2.59$\mkern-1mu\to\mkern-1mu$2.46 & \textbf{$-$5.1} & 3.2$\mkern-1mu\to\mkern-1mu$\textbf{2.4} \\
CP4Gen & 1.36$\mkern-1mu\to\mkern-1mu$1.18 & \textbf{$-$13.0} & 14.3$\mkern-1mu\to\mkern-1mu$\textbf{13.2} & 2.55$\mkern-1mu\to\mkern-1mu$2.60 & +1.5 & 4.8$\mkern-1mu\to\mkern-1mu$4.9 & 2.38$\mkern-1mu\to\mkern-1mu$2.43 & +2.3$^\dagger$ & 5.3$\mkern-1mu\to\mkern-1mu$5.5$^\dagger$ \\
C-PCP$^\ddagger$ & 2.08$\mkern-1mu\to\mkern-1mu$1.99 & \textbf{$-$4.3} & 2.6$\mkern-1mu\to\mkern-1mu$2.8 & 2.55$\mkern-1mu\to\mkern-1mu$1.96 & \textbf{$-$23.2} & 3.1$\mkern-1mu\to\mkern-1mu$2.9 & 2.62$\mkern-1mu\to\mkern-1mu$2.57 & \textbf{$-$2.0} & 3.0$\mkern-1mu\to\mkern-1mu$3.3 \\
\bottomrule\end{tabular}
\end{table*}

Table~\ref{tab:baselines} compares other generative conformal prediction methods with and without the count allocation of \casa{} on a subset of DGPs. \casa{} shrinks most of their sets and lowers conditional coverage error where the construction's radius does not already target it, consistent with Proposition~\ref{prop:oracle}. This reduction is not guaranteed: because \casa{} allocates counts to reduce size, it need not further reduce conditional coverage error when the construction's radius already targets it, as for CP$^2$.

\begin{figure}[!t]
\centering
\includegraphics[width=\linewidth]{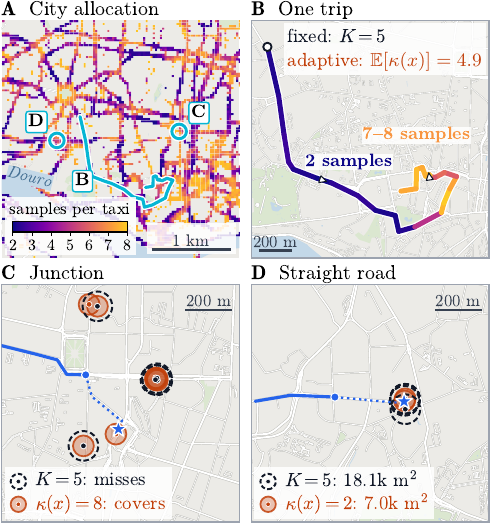}
\caption{Porto taxi routes at $B=5$. (A)~Mean allocated samples. (B)~Samples along one trip. \mbox{(C--D)}~More samples at a junction, fewer on a straight road. Stars mark true positions; dashed and filled circles are fixed PCP and \casa{} sets.}
\label{fig:porto}
\vspace{-0.2cm}
\end{figure}
\textbf{Real data: taxi routes.}\label{sec:porto} We present one real task here, taxi trajectory prediction on the Porto dataset \citep{porto2015trajectory}: from a taxi's recent GPS positions, we predict its position $300$\,m further along its route. The future is multimodal: a straight road leaves one likely position, whereas a junction leaves one per exit. The generator samples futures from similar training trips (Appendix~\ref{app:porto}). \casa{} allocates samples along the street geometry (Figure~\ref{fig:porto}): taxis on through-roads receive two samples and those near junctions up to eight, and the extra samples cover exits that fixed PCP misses. The shared calibrated radius then falls, so sets shrink even on straight roads. At the same budget, \casa{} keeps the target coverage with sets about $37\%$ smaller than fixed PCP, and the gain persists at equal realized samples, under a time split, and with a learned generator. We also consider two further real tasks with different response types (Appendices~\ref{app:protein} and~\ref{app:answers}). For protein backbone angles, \casa{} reduces set area by about $13\%$ at the same budget and $24\%$ at equal samples. For visual question answering with a vision-language model, where each response is a discrete answer, it matches fixed sampling's answer-set size with about $28\%$ fewer samples at an $80\%$ target.

\textbf{Ablation and sensitivity.} A natural question is whether the gain comes from matching counts to inputs or merely from letting the count vary over a wider range. To separate the two, we compare \casa{} with input-blind counts drawn uniformly from the same candidate set at the same budget with $B=5$ on the synthetic DGPs. Fixed PCP attains $90.0\%$ coverage with a MACE of $8.9$ percentage points. Random counts enlarge PCP uncertainty sets by $6\%$ and leave MACE nearly unchanged, whereas \casa{} shrinks the set size by $12\%$ and lowers MACE to $7.8$, with valid marginal coverage. Varying the count without regard to the input therefore does not reproduce the gain; it comes from giving samples where they are worth most.

Step~1 is also robust to the regression model that estimates $\widehat c_k$ and $\widehat v_k$: changing its number of neighbors or replacing it with a fixed partition of the input space keeps \casa{} valid and changes its mean set size by less than $1\%$, preserving the reduction over fixed PCP (Appendix~\ref{app:implementation}).

\vspace{-0.1cm}
\section{Conclusion}
\vspace{-.1cm}
We proposed \casa{}, which allocates generator samples across inputs to minimize set size under an average sampling budget while preserving finite-sample marginal coverage; theory and experiments show that reallocation pays off when a fixed count misses modes. Several limitations suggest extensions. First, the count is chosen from the input before any sample is drawn; letting it adapt to the samples already drawn could detect missed modes directly. Second, the theory assumes separated modes, and extending it to smooth or heavy-tailed distributions would clarify when allocation helps beyond representation. Third, allocation cannot repair a generator that never samples a relevant mode. Fourth, estimating coverage and size requires a one-time pilot pool, costly for expensive generators. Finally, \casa{} targets set size, so conditional coverage improves only as a by-product; targeting it directly, or extending to discrete outputs, is a natural next step.

\clearpage
\subsubsection*{Acknowledgments}
This work was supported by the National Science Foundation under Grant No.~CAIG-2425888.

\bibliographystyle{plainnat}
\bibliography{ref}

@inproceedings{wang2023pcp,
 title={Probabilistic Conformal Prediction Using Conditional Random Samples},
 author={Wang, Zhendong and Gao, Ruijiang and Yin, Mingzhang and Zhou, Mingyuan and Blei, David},
 booktitle={Proceedings of the 26th International Conference on Artificial Intelligence and Statistics},
 volume={206}, pages={8814--8836}, publisher={PMLR}, year={2023},
 url={https://proceedings.mlr.press/v206/wang23n.html}}

@article{small2014synoptic,
  title={A new synoptic scale resolving global climate simulation using the Community Earth System Model},
  author={Small, R Justin and Bacmeister, Julio and Bailey, David and Baker, Allison and Bishop, Stuart and Bryan, Frank and Caron, Julie and Dennis, John and Gent, Peter and Hsu, Hsiao-ming and others},
  journal={Journal of Advances in Modeling Earth Systems},
  volume={6},
  number={4},
  pages={1065--1094},
  year={2014},
  publisher={Wiley Online Library}
}

@inproceedings{
zhou2026calibrating,
title={Calibrating Decision Robustness via Inverse Conformal Risk Control},
author={Wenbin Zhou and Shixiang Zhu},
booktitle={Forty-third International Conference on Machine Learning},
year={2026},
url={https://openreview.net/forum?id=lV4tqcVIyx}
}

@article{kay2015cesm,
  title={The Community Earth System Model (CESM) large ensemble project: A community resource for studying climate change in the presence of internal climate variability},
  author={Kay, Jennifer E and Deser, Clara and Phillips, A and Mai, Angeline and Hannay, Cecile and Strand, Gary and Arblaster, Julie Michelle and Bates, Susan C and Danabasoglu, Gokhan and Edwards, James and others},
  journal={Bulletin of the American Meteorological Society},
  volume={96},
  number={8},
  pages={1333--1349},
  year={2015}
}

@inproceedings{plassier2025cp2,
 title={Probabilistic Conformal Prediction with Approximate Conditional Validity},
 author={Plassier, Vincent and Fishkov, Alexander and Guizani, Mohsen and Panov, Maxim and Moulines, Eric},
 booktitle={International Conference on Learning Representations}, year={2025}}

@inproceedings{plassier2025rcp,
 title={Rectifying Conformity Scores for Better Conditional Coverage},
 author={Plassier, Vincent and Fishkov, Alexander and Dheur, Victor and Guizani, Mohsen and {Ben Taieb}, Souhaib and Panov, Maxim and Moulines, Eric},
 booktitle={Proceedings of the 42nd International Conference on Machine Learning}, year={2025},
 url={https://arxiv.org/abs/2502.16336}}

@inproceedings{dheur2025unified,
 title={A Unified Comparative Study with Generalized Conformity Scores for Multi-Output Conformal Regression},
 author={Dheur, Victor and Fontana, Matteo and Estievenart, Yorick and Desobry, Naomi and {Ben Taieb}, Souhaib},
 booktitle={Proceedings of the 42nd International Conference on Machine Learning},
 series={Proceedings of Machine Learning Research}, volume={267}, pages={13444--13485},
 publisher={PMLR}, year={2025}, url={https://proceedings.mlr.press/v267/dheur25a.html}}

@article{zheng2024generative,
 title={Generative Conformal Prediction with Vectorized Non-Conformity Scores},
 author={Zheng, Minxing and Zhu, Shixiang},
 journal={arXiv preprint arXiv:2410.13735}, year={2024},
 doi={10.48550/arXiv.2410.13735},
 url={https://arxiv.org/abs/2410.13735}}

@article{yang2026cp4gen,
 title={Conformal Prediction for Generative Models via Adaptive Cluster-Based Density Estimation},
 author={Yang, Qidong and Zhu, Qianyu Julie and Giezendanner, Jonathan and Marzouk, Youssef and Bates, Stephen and Wang, Sherrie},
 journal={arXiv preprint arXiv:2601.22298}, year={2026},
 url={https://arxiv.org/abs/2601.22298}}

@article{barber2021limits,
 title={The Limits of Distribution-Free Conditional Predictive Inference},
 author={Barber, Rina Foygel and Cand\`es, Emmanuel J. and Ramdas, Aaditya and Tibshirani, Ryan J.},
 journal={Information and Inference}, volume={10}, number={2}, pages={455--482}, year={2021}}

@article{chernozhukov2021distributional,
 title={Distributional Conformal Prediction},
 author={Chernozhukov, Victor and W{\"u}thrich, Kaspar and Zhu, Yinchu},
 journal={Proceedings of the National Academy of Sciences},
 volume={118}, number={48}, pages={e2107794118}, year={2021},
 url={https://arxiv.org/abs/1909.07889}}

@book{vovk2005algorithmic,
 title={Algorithmic Learning in a Random World},
 author={Vovk, Vladimir and Gammerman, Alexander and Shafer, Glenn},
 publisher={Springer}, address={New York}, year={2005},
 doi={10.1007/b106715}}

@article{lei2018distribution,
 title={Distribution-Free Predictive Inference for Regression},
 author={Lei, Jing and G'Sell, Max and Rinaldo, Alessandro and Tibshirani, Ryan J. and Wasserman, Larry},
 journal={Journal of the American Statistical Association}, volume={113}, number={523},
 pages={1094--1111}, year={2018}, doi={10.1080/01621459.2017.1307116}}

@article{angelopoulos2023conformal,
 title={Conformal Prediction: A Gentle Introduction},
 author={Angelopoulos, Anastasios N. and Bates, Stephen},
 journal={Foundations and Trends in Machine Learning}, volume={16}, number={4},
 pages={494--591}, year={2023}, doi={10.1561/2200000101}}

@article{lei2015functional,
 title={A Conformal Prediction Approach to Explore Functional Data},
 author={Lei, Jing and Rinaldo, Alessandro and Wasserman, Larry},
 journal={Annals of Mathematics and Artificial Intelligence}, volume={74}, number={1--2},
 pages={29--43}, year={2015}, doi={10.1007/s10472-013-9366-6}}

@article{diquigiovanni2022functional,
 title={Conformal Prediction Bands for Multivariate Functional Data},
 author={Diquigiovanni, Jacopo and Fontana, Matteo and Vantini, Simone},
 journal={Journal of Multivariate Analysis}, volume={189}, pages={104879}, year={2022},
 doi={10.1016/j.jmva.2021.104879}}

@inproceedings{zaffran2022adaptive,
 title={Adaptive Conformal Predictions for Time Series},
 author={Zaffran, Margaux and F{\'e}ron, Olivier and Goude, Yannig and Josse, Julie and Dieuleveut, Aymeric},
 booktitle={Proceedings of the 39th International Conference on Machine Learning},
 series={Proceedings of Machine Learning Research}, volume={162}, pages={25834--25866},
 publisher={PMLR}, year={2022}, url={https://proceedings.mlr.press/v162/zaffran22a.html}}

@inproceedings{xu2024multidimensional,
 title={Conformal Prediction for Multi-Dimensional Time Series by Ellipsoidal Sets},
 author={Xu, Chen and Jiang, Hanyang and Xie, Yao},
 booktitle={Proceedings of the 41st International Conference on Machine Learning},
 series={Proceedings of Machine Learning Research}, volume={235}, pages={55076--55099},
 publisher={PMLR}, year={2024}, url={https://proceedings.mlr.press/v235/xu24m.html}}

@inproceedings{javanmardi2023partial,
 title={Conformal Prediction with Partially Labeled Data},
 author={Javanmardi, Alireza and Sale, Yusuf and Hofman, Paul and H{\"u}llermeier, Eyke},
 booktitle={Proceedings of the Twelfth Symposium on Conformal and Probabilistic Prediction with Applications},
 series={Proceedings of Machine Learning Research}, volume={204}, pages={251--266},
 publisher={PMLR}, year={2023}, url={https://proceedings.mlr.press/v204/javanmardi23a.html}}

@inproceedings{alaa2023meta,
 title={Conformal Meta-Learners for Predictive Inference of Individual Treatment Effects},
 author={Alaa, Ahmed M. and Ahmad, Zaid and van der Laan, Mark},
 booktitle={Advances in Neural Information Processing Systems}, volume={36},
 pages={47682--47703}, year={2023}}

@article{harris2026flow,
 title={Flow-Based Conformal Predictive Distributions},
 author={Harris, Trevor}, journal={arXiv preprint arXiv:2602.07633}, year={2026},
 url={https://arxiv.org/abs/2602.07633}}

@inproceedings{shahrokhi2025gps,
 title={Conformal Prediction Sets for Deep Generative Models via Reduction to Conformal Regression},
 author={Shahrokhi, Hooman and Roy, Devjeet Raj and Yan, Yan and Arnaoudova, Venera and Doppa, Jana},
 booktitle={Proceedings of the Forty-First Conference on Uncertainty in Artificial Intelligence},
 series={Proceedings of Machine Learning Research}, volume={286}, pages={3718--3748},
 publisher={PMLR}, year={2025}, url={https://proceedings.mlr.press/v286/shahrokhi25a.html}}

@article{graves2016adaptive,
 title={Adaptive Computation Time for Recurrent Neural Networks},
 author={Graves, Alex}, journal={arXiv preprint arXiv:1603.08983}, year={2016},
 url={https://arxiv.org/abs/1603.08983}}

@inproceedings{zuo2026strategic,
 title={Strategic Scaling of Test-Time Compute: A Bandit Learning Approach},
 author={Zuo, Bowen and Zhu, Yinglun},
 booktitle={International Conference on Learning Representations}, year={2026}}

@article{lin2023esm,
 title={Evolutionary-Scale Prediction of Atomic-Level Protein Structure with a Language Model},
 author={Lin, Zeming and Akin, Halil and Rao, Roshan and Hie, Brian and Zhu, Zhongkai and Lu, Wenting and Smetanin, Nikita and Verkuil, Robert and Kabeli, Ori and Shmueli, Yaniv and others},
 journal={Science}, volume={379}, number={6637}, pages={1123--1130}, year={2023}}

@inproceedings{ingraham2019generative,
 title={Generative Models for Graph-Based Protein Design},
 author={Ingraham, John and Garg, Vikas K. and Barzilay, Regina and Jaakkola, Tommi},
 booktitle={Advances in Neural Information Processing Systems}, year={2019}}

@inproceedings{singh2019textvqa,
 title={Towards {VQA} Models That Can Read},
 author={Singh, Amanpreet and Natarajan, Vivek and Shah, Meet and Jiang, Yu and Chen, Xinlei and Batra, Dhruv and Parikh, Devi and Rohrbach, Marcus},
 booktitle={Proceedings of the IEEE/CVF Conference on Computer Vision and Pattern Recognition}, year={2019}}

@inproceedings{marino2019okvqa,
 title={{OK-VQA}: A Visual Question Answering Benchmark Requiring External Knowledge},
 author={Marino, Kenneth and Rastegari, Mohammad and Farhadi, Ali and Mottaghi, Roozbeh},
 booktitle={Proceedings of the IEEE/CVF Conference on Computer Vision and Pattern Recognition}, year={2019}}

@inproceedings{su2024api,
 title={{API} Is Enough: Conformal Prediction for Large Language Models Without Logit-Access},
 author={Su, Jiayuan and Luo, Jing and Wang, Hongwei and Cheng, Lu},
 booktitle={Findings of the Association for Computational Linguistics: EMNLP}, year={2024}}

@article{bai2025qwen25vl,
 title={{Qwen2.5-VL} Technical Report},
 author={Bai, Shuai and Chen, Keqin and Liu, Xuejing and Wang, Jialin and Ge, Wenbin and Song, Sibo and Dang, Kai and Wang, Peng and Wang, Shijie and Tang, Jun and others},
 journal={arXiv preprint arXiv:2502.13923}, year={2025}}

@article{karmarkar1984new,
 title={A New Polynomial-Time Algorithm for Linear Programming},
 author={Karmarkar, Narendra},
 journal={Combinatorica}, volume={4}, number={4}, pages={373--395}, year={1984}}

@inproceedings{ho2020denoising,
 title={Denoising Diffusion Probabilistic Models},
 author={Ho, Jonathan and Jain, Ajay and Abbeel, Pieter},
 booktitle={Advances in Neural Information Processing Systems}, volume={33}, pages={6840--6851}, year={2020}}

@inproceedings{brown2020language,
 title={Language Models are Few-Shot Learners},
 author={Brown, Tom B. and Mann, Benjamin and Ryder, Nick and Subbiah, Melanie and Kaplan, Jared and Dhariwal, Prafulla and others},
 booktitle={Advances in Neural Information Processing Systems}, volume={33}, pages={1877--1901}, year={2020}}

@article{shafer2008tutorial,
  title={A tutorial on conformal prediction},
  author={Shafer, Glenn and Vovk, Vladimir},
  journal={Journal of Machine Learning Research},
  volume={9},
  pages={371--421},
  year={2008}
}

@inproceedings{romano2019conformalized,
  title={Conformalized quantile regression},
  author={Romano, Yaniv and Patterson, Evan and Cand{\`e}s, Emmanuel},
  booktitle={Advances in Neural Information Processing Systems},
  volume={32},
  year={2019}
}

@inproceedings{tibshirani2019conformal,
  title={Conformal prediction under covariate shift},
  author={Tibshirani, Ryan J and Barber, Rina Foygel and Cand{\`e}s, Emmanuel J and Ramdas, Aaditya},
  booktitle={Advances in Neural Information Processing Systems},
  volume={32},
  pages={2526--2536},
  year={2019}
}

@article{barber2023conformal,
  title={Conformal prediction beyond exchangeability},
  author={Barber, Rina Foygel and Cand{\`e}s, Emmanuel J and Ramdas, Aaditya and Tibshirani, Ryan J},
  journal={The Annals of Statistics},
  volume={51},
  number={2},
  pages={816--845},
  year={2023}
}

@article{bates2021distribution,
  title={Distribution-free, risk-controlling prediction sets},
  author={Bates, Stephen and Angelopoulos, Anastasios N and Lei, Lihua and Malik, Jitendra and Jordan, Michael I},
  journal={Journal of the ACM},
  volume={68},
  number={6},
  pages={1--34},
  year={2021}
}

@inproceedings{angelopoulos2024conformal,
  title={Conformal risk control},
  author={Angelopoulos, Anastasios N and Bates, Stephen and Fisch, Adam and Lei, Lihua and Schuster, Tal},
  booktitle={International Conference on Learning Representations},
  year={2024}
}

@inproceedings{gibbs2021adaptive,
  title={Adaptive conformal inference under distribution shift},
  author={Gibbs, Isaac and Cand{\`e}s, Emmanuel},
  booktitle={Advances in Neural Information Processing Systems},
  volume={34},
  pages={1660--1672},
  year={2021}
}

@inproceedings{xu2021conformal,
  title={Conformal prediction interval for dynamic time-series},
  author={Xu, Chen and Xie, Yao},
  booktitle={International Conference on Machine Learning},
  pages={11559--11569},
  year={2021}
}

@article{harris2025locally,
  title={Locally adaptive conformal inference for operator models},
  author={Harris, Trevor and Liu, Yan},
  journal={arXiv preprint arXiv:2507.20975},
  year={2025}
}

@article{einbinder2024label,
  author  = {Bat-Sheva Einbinder and Shai Feldman and Stephen Bates and Anastasios N. Angelopoulos and Asaf Gendler and Yaniv Romano},
  title   = {Label Noise Robustness of Conformal Prediction},
  journal = {Journal of Machine Learning Research},
  year    = {2024},
  volume  = {25},
  number  = {328},
  pages   = {1--66},
  url     = {https://jmlr.org/papers/v25/23-1549.html}
}

@article{zheng2026latent,
  title={Beyond Prediction: Conformal Inference for Latent Distributional Parameters},
  author={Zheng, Minxing and Zhou, Wenbin and Zhu, Shixiang},
  journal={arXiv preprint arXiv:2608.03607},
  year={2026},
  url={https://arxiv.org/abs/2608.03607}
}

@article{izbicki2022cd,
  title={{CD}-split and {HPD}-split: Efficient conformal regions in high dimensions},
  author={Izbicki, Rafael and Shimizu, Gilson and Stern, Rafael B},
  journal={Journal of Machine Learning Research},
  volume={23},
  number={87},
  pages={1--32},
  year={2022}
}

@inproceedings{noorani2025cpq,
 title={Conformal Prediction Beyond the Seen: A Missing Mass Perspective for Uncertainty Quantification in Generative Models},
 author={Noorani, Sima and Kiyani, Shayan and Pappas, George and Hassani, Hamed},
 booktitle={Advances in Neural Information Processing Systems}, year={2025}}

@inproceedings{damani2025hard,
 title={Learning How Hard to Think: Input-Adaptive Allocation of {LM} Computation},
 author={Damani, Mehul and Shenfeld, Idan and Peng, Andi and Bobu, Andreea and Andreas, Jacob},
 booktitle={International Conference on Learning Representations}, year={2025}}

@inproceedings{quach2024clm,
 title={Conformal Language Modeling},
 author={Quach, Victor and Fisch, Adam and Schuster, Tal and Yala, Adam and Sohn, Jae Ho and Jaakkola, Tommi and Barzilay, Regina},
 booktitle={International Conference on Learning Representations}, year={2024}}

@inproceedings{snell2025scaling,
 title={Scaling {LLM} Test-Time Compute Optimally Can be More Effective than Scaling Parameters for Reasoning},
 author={Snell, Charlie and Lee, Jaehoon and Xu, Kelvin and Kumar, Aviral},
 booktitle={International Conference on Learning Representations}, year={2025},
 url={https://arxiv.org/abs/2408.03314}}

@inproceedings{wang2023self,
 title={Self-Consistency Improves Chain of Thought Reasoning in Language Models},
 author={Wang, Xuezhi and Wei, Jason and Schuurmans, Dale and Le, Quoc and Chi, Ed and Narang, Sharan and Chowdhery, Aakanksha and Zhou, Denny},
 booktitle={International Conference on Learning Representations}, year={2023},
 url={https://arxiv.org/abs/2203.11171}}

@article{gibbs2025conformal,
 title={Conformal Prediction with Conditional Guarantees},
 author={Gibbs, Isaac and Cherian, John J. and Cand{\`e}s, Emmanuel J.},
 journal={Journal of the Royal Statistical Society Series B: Statistical Methodology}, year={2025}}

@article{neyman1934two,
 title={On the Two Different Aspects of the Representative Method: The Method of Stratified Sampling and the Method of Purposive Selection},
 author={Neyman, Jerzy},
 journal={Journal of the Royal Statistical Society}, volume={97}, number={4}, pages={558--625}, year={1934}}

@article{bugallo2017adaptive,
 title={Adaptive Importance Sampling: The Past, the Present, and the Future},
 author={Bugallo, M{\'o}nica F. and Elvira, V{\'\i}ctor and Martino, Luca and Luengo, David and M{\'\i}guez, Joaqu{\'\i}n and Djuri{\'c}, Petar M.},
 journal={IEEE Signal Processing Magazine}, volume={34}, number={4}, pages={60--79}, year={2017}}

@inproceedings{audibert2010best,
 title={Best Arm Identification in Multi-Armed Bandits},
 author={Audibert, Jean-Yves and Bubeck, S{\'e}bastien and Munos, R{\'e}mi},
 booktitle={Conference on Learning Theory}, pages={41--53}, year={2010}}

@misc{porto2015trajectory,
 title={Taxi Service Trajectory -- Prediction Challenge, ECML PKDD 2015},
 author={{UCI Machine Learning Repository}},
 howpublished={UCI Machine Learning Repository},
 year={2015},
 doi={10.24432/C55W25},
 url={https://doi.org/10.24432/C55W25}}

\clearpage
\appendix

\onecolumn

\section{Proofs and Theoretical Results}
\label{app:proofs}

\subsection{Estimation and Coverage}
Throughout this appendix, the radius rule $r(\cdot)$ is fixed, $\mathcal B(y)$ is the ball of that radius around $y$, and $m(x;y)$, $h_k(x;y)$, $c_k(x)$, and $v_k(x)$ are as in Section~\ref{sec:formulation}. When the radius $r$ also varies, as in the joint variant, we add it as an argument, writing $m(x;y;r)$, $h_k(x;y;r)$, $c_k(x;r)$, and $v_k(x;r)$.

\paragraph{Diminishing sampling increments.}
Define $\Delta c_k(x)=c_{k+1}(x)-c_k(x)$ and $\Delta v_k(x)=v_{k+1}(x)-v_k(x)$. Then
\begin{equation}
 \begin{aligned}
 \Delta c_k(x)&=\int m(x;y)^k\{1-m(x;y)\}\,\d\mathcal P_x(y),\\
 \Delta v_k(x)&=\int m(x;y)^k\{1-m(x;y)\}\,\d y.
 \end{aligned}
 \label{eq:increment}
\end{equation}
To prove this, and that both increments are nonnegative and nonincreasing in $k$, fix $x$ and $y$ and write $m=m(x;y)\in[0,1]$. By \eqref{eq:inclusion}, $\Delta h_k(x;y)=h_{k+1}(x;y)-h_k(x;y)=m^k(1-m)$, the probability that the first $k$ draws miss $\mathcal B(y)$ and the next one falls in it. This is nonnegative and nonincreasing in $k$. Integrating against $\mathcal P_x$ and against volume proves the identities and the two shape properties; volume calculations use Tonelli's theorem, and finite expected volume ensures that the differences are defined. \hfill$\square$

\paragraph{Rao--Blackwell estimation.}
\begin{proposition}[Unbiasedness and variance reduction]
\label{prop:rb}
For fixed $(X_i,Y_i)$ and $k\le N$, the estimator $\widehat h_k(X_i;y)$ in \eqref{eq:rb} is unbiased over the pool for $h_k(X_i;y)$. Consequently, $\widehat c_k(X_i)$ in \eqref{eq:perinput} is unbiased for $h_k(X_i;Y_i)$, whose average over $Y_i\sim\mathcal P_{X_i}$ is $c_k(X_i)$, and $\widehat v_k(X_i)$ is unbiased for $v_k(X_i)$ when it is finite. Both are conditional expectations, given the pool, of the coverage indicator and the volume of a uniformly selected $k$-subset, and therefore have no larger variance than those statistics. Their curves in $k$, and their regressions in Step~1 with nonnegative weights, are nondecreasing and discretely concave.
\end{proposition}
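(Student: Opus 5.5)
The plan is to view $\widehat h_k$ as the Rao--Blackwellization of a single subset statistic. Fix $(X_i,Y_i)$ and the radius, and let $T\subseteq\{1,\ldots,N\}$ be a uniformly random $k$-subset drawn independently of the pool. Define the subset statistic $I_T(y)=\one\{\exists j\in T:\|\widehat Y_{i,j}-y\|_2\le r\}$. Given the pool, the number of selected indices among the $H(X_i;y)$ ``hitting'' indices is hypergeometric. Hence
\[
\bP\{I_T(y)=0\mid\text{pool}\}=\binom{N-H(X_i;y)}{k}\Big/\binom{N}{k},
\]
so $\widehat h_k(X_i;y)=\bE[I_T(y)\mid\text{pool}]$.

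For unbiasedness, note that the pool draws are iid from $\widehat f(X_i)$ and $T$ is independent of them. The subsample $\{\widehat Y_{i,j}\}_{j\in T}$ is therefore $k$ iid draws. By the tower property, $\bE\,\widehat h_k(X_i;y)=\bP\{I_T(y)=1\}=1-m(X_i;y)^k=h_k(X_i;y)$ by \eqref{eq:inclusion}.

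\begin{itemize}
\item Taking $y=Y_i$, where the pool is independent of $Y_i$ given $X_i$, gives unbiasedness of $\widehat c_k(X_i)$ for $h_k(X_i;Y_i)$. Averaging over $Y_i\sim\mathcal P_{X_i}$ then yields $c_k(X_i)$.
\item For $\widehat v_k$, integrate in $y$ and use Tonelli, since $\widehat h_k\ge0$, to swap expectation and integral. This gives $\bE\,\widehat v_k=v_k$ whenever $v_k$ is finite. Also $\widehat v_k=\bE[|\bigcup_{j\in T}\mathcal B(\widehat Y_{i,j})|\mid\text{pool}]$, again by Tonelli.
\end{itemize}

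The variance claim then follows from the conditional variance decomposition. For $Z=I_T(Y_i)$ or $Z=$ the subset volume, $\mathrm{Var}(Z)=\mathrm{Var}(\bE[Z\mid\text{pool}])+\bE\,\mathrm{Var}(Z\mid\text{pool})\ge\mathrm{Var}(\bE[Z\mid\text{pool}])$. For the volume, this requires a finite second moment; otherwise the inequality holds trivially.

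For the shape in $k$, fix the pool and $y$, and write $a_k=\binom{N-H}{k}/\binom{N}{k}=\prod_{l=0}^{k-1}\frac{N-H-l}{N-l}$. Then $a_{k+1}=a_k\rho_k$ with $\rho_k=\frac{N-H-k}{N-k}\in[0,1]$, so $a_k$ is nonincreasing and $\widehat h_k$ is nondecreasing.

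Discrete concavity of $\widehat h_k$ is the main step. It amounts to $a_k-a_{k+1}\ge a_{k+1}-a_{k+2}$, which I would prove as $a_k(1-\rho_k)\ge a_k\rho_k(1-\rho_{k+1})$. The case $a_k=0$ is trivial. Otherwise it suffices to show $1-\rho_k\ge\rho_k(1-\rho_{k+1})$. Using $1-\rho_k=\frac{H}{N-k}$ and $1-\rho_{k+1}=\frac{H}{N-k-1}$, this reduces to $\frac{1}{N-k}\ge\frac{N-H-k}{(N-k)(N-k-1)}$, i.e.\ $N-k-1\ge N-H-k$, i.e.\ $H\ge1$. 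When $H=0$ all differences vanish.

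One care point is the range of $k$. The identity $a_{k+2}=a_{k+1}\rho_{k+1}$ and the formula for $1-\rho_{k+1}$ need $k+2\le N$, and the boundary case $k+1=N$ must be checked separately. Alternatively, the convention $\binom{n}{k}=0$ for $k>n$ handles the terms where $N-H<k$.

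These pointwise properties pass to the aggregates:

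\begin{itemize}
\item Both monotonicity and concavity are preserved under integration in $y$, giving the claim for $\widehat v_k$, and under evaluation at $y=Y_i$, giving the claim for $\widehat c_k$.
\item They are also preserved under nonnegative linear combinations across $i$. So any regression in Step 1 whose fitted value is $\sum_i w_i(x)\,\widehat c_k(X_i)$, with weights $w_i\ge0$ independent of $k$ (e.g.\ $k$-NN or partition averages), inherits monotonicity and discrete concavity in $k$.
\end{itemize}
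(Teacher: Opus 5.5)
Your proposal is correct and follows the paper's proof essentially step for step. Both arguments write the estimator as the conditional expectation of the coverage indicator over a uniformly random $k$-subset, use the tower property and the law of total variance for unbiasedness and variance reduction, and obtain discrete concavity from the increment ratio $(N-H-k)/(N-k-1)\le 1$ for $H\ge1$ (your inequality $1-\rho_k\ge\rho_k(1-\rho_{k+1})$), which then passes to integrals and to regressions with nonnegative, $k$-independent weights.
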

\emph{Proof.} Condition on $(X_i,Y_i)$ and on the unordered pool, and choose a uniformly random $k$-subset $A$ of it. Exactly $\binom{N-H(X_i;y)}{k}$ of the $\binom Nk$ subsets contain no sample within radius $r$ of $y$, so $\widehat h_k(X_i;y)$ is the conditional probability that $A$ covers $y$. Taking $y=Y_i$ gives the statement for $\widehat c_k(X_i)$, and integrating over $y$ gives it for $\widehat v_k(X_i)$. Before conditioning on the pool, the selected samples are $k$ independent draws from $\widehat f(X_i)$, which gives unbiasedness, and the law of total variance gives the variance comparison whenever second moments exist. For the shape property, fix a value $H\ge1$ of the count and let $a(k)=1-\binom{N-H}{k}/\binom Nk$; if $H=0$, then $a(k)=0$ for all $k$. For $0\le k<N$,
\[
 a(k+1)-a(k)=\frac{\binom{N-H}{k}}{\binom Nk}\frac{H}{N-k}\ge0,
\]
and successive positive increments have ratio $(N-H-k)/(N-k-1)\le1$, with all increments zero once the miss probability vanishes. So $a(k)$ is nondecreasing and discretely concave, and integration and nonnegative regression weights preserve both properties. \hfill$\square$

\paragraph{Finite-sample coverage.}
\emph{Proof of Theorem~\ref{thm:validity}.} Condition on the generator's training data, on $\mathcal D_m$, and on all quantities fitted from them, including $\widehat\kappa$. Augment each calibration pair and the test pair with an independent random seed that determines its count $\widehat\kappa(X)$, if the rule is randomized, and its generated samples. These augmented observations are exchangeable. The frozen procedure that draws $\widehat\kappa(X)$ samples $\widehat Y_1,\ldots,\widehat Y_{\widehat\kappa(X)}$ and computes $S=\min_{j\le\widehat\kappa(X)}\|Y-\widehat Y_j\|_2/r(X)$ is the same measurable map for every observation, so the scores are exchangeable, and the conformal order-statistic argument gives $\bP\{S_{\mathrm{test}}\le\widehat q\}\ge1-\alpha$. This event is exactly $Y_{\mathrm{test}}\in\widehat\cC_\alpha(X_{\mathrm{test}})$, the set \eqref{eq:pcpset} with $K=\widehat\kappa(X_{\mathrm{test}})$. Averaging over the conditioning information proves the theorem. The same argument applies to the joint variant with the learned radius $\widehat r(x)$ in place of $r(x)$. \hfill$\square$

\subsection{Optimization}
\label{app:dual}
Throughout this subsection, write $t=1-\alpha$. We state the results for the joint variant, in which each input chooses a count $k\in\cK$ and a radius $r$ from a finite grid $\cR$; count-only \casa{} is the special case of a single radius, $|\cR|=1$. Abbreviate a pair as $a=(k,r)$, with $k(a)$ its count and $\cA=\cK\times\cR$ the set of pairs, and write $\widehat c_i(a)=\widehat c_k(X_i;r)$, $\widehat v_i(a)=\widehat v_k(X_i;r)$, and $g_i(a)=\widehat v_i(a)-\mu\,\widehat c_i(a)+\lambda k(a)$ for the tables and priced costs, so that $g_i(a)=g_k(X_i)$ in the count-only case. The allocation set has $m$ observations.

\paragraph{The empirical allocation problem.}
Let $\pi_i(k,r)=\pi(k,r\mid X_i)$ be a distribution over the finite candidate pairs at input $X_i$. The estimated counterpart of \eqref{eq:population} is
\begin{equation}
 \begin{aligned}
 \min_{\pi_1,\ldots,\pi_m}\quad&\frac1m\sum_{i,a}\pi_i(a)\,\widehat v_i(a)\\
 \text{s.t.}\quad&\frac1m\sum_{i,a}\pi_i(a)\,\widehat c_i(a)\ge1-\alpha,\\
 &\frac1m\sum_{i,a}\pi_i(a)\,k(a)\le B,
 \end{aligned}
 \label{eq:allocation}
\end{equation}
where the sums run over $k\in\cK$ and $r\in\cR$, with $\pi_i(k,r)\ge0$ and $\sum_{k,r}\pi_i(k,r)=1$ for each $i$. Write $\widehat V^*$ for its optimal value and $\widehat V(\pi)$, $\widehat C(\pi)$, and $\widehat K(\pi)$ for its mean estimated volume, coverage, and count. Only the coverage and budget constraints couple the inputs. The dual prices these constraints and minimizes over each input's distribution separately. When minimizers tie, mixing them at no more than two inputs gives a feasible optimal policy (Proposition~\ref{prop:dual}).

\paragraph{Gain over fixed PCP.}
The dual multipliers also characterize the improvement over a fixed allocation in the empirical program.
\begin{proposition}[Gain over fixed PCP]
\label{prop:gain}
Suppose \eqref{eq:allocation} is feasible, with optimal value $\widehat V^*$, and write $\pi_i(k,r)=\pi(k,r\mid X_i)$. There exist optimal dual multipliers $\mu,\lambda\ge0$ such that every policy satisfies
\begin{equation}
 \begin{aligned}
 \widehat V(\pi)-\widehat V^*
 &=\frac1m\sum_{i,a}\pi_i(a)
   \Bigl\{g_i(a)-\min_{a'}g_i(a')\Bigr\}\\
 &\quad+\mu\bigl\{\widehat C(\pi)-(1-\alpha)\bigr\}\\
 &\quad+\lambda\bigl\{B-\widehat K(\pi)\bigr\},
 \end{aligned}
 \label{eq:certificate}
\end{equation}
where $\widehat V(\pi)$, $\widehat C(\pi)$, and $\widehat K(\pi)$ are the mean estimated volume, coverage, and count of $\pi$ over $\mathcal D_m$; for a feasible policy all three terms are nonnegative. In particular, fixed PCP, which uses count $B$ and one radius $r$ at every input, with $r$ meeting the coverage target exactly, exceeds $\widehat V^*$ by its priced regret $\widehat R_F=m^{-1}\sum_i\{g_i(B,r)-\min_{k',r'}g_i(k',r')\}$, which is zero if and only if $(B,r)$ is the cheapest choice at every input.
\end{proposition}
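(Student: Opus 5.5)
The plan is to use LP duality for the finite linear program \eqref{eq:allocation} and then expand the Lagrangian exactly. Problem \eqref{eq:allocation} is a linear program in the finitely many variables $\pi_i(a)$. It is feasible and bounded, since the objective is nonnegative, and its feasible set is a compact polytope. Strong LP duality therefore gives optimal multipliers $\mu\ge0$ for the coverage constraint and $\lambda\ge0$ for the budget constraint. With these, the dual value
\[
\mu(1-\alpha)-\lambda B+\frac1m\sum_i\min_{a}g_i(a)
\]
equals $\widehat V^*$. This is the content of Theorem~\ref{thm:dual}, where the optimal $u_i$ is $\min_a g_i(a)$; we invoke it directly, in its joint form with $\cA$ in place of $\cK$.

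Next comes an exact algebraic identity valid for every policy $\pi$, feasible or not. Each $\pi_i$ sums to one, so
\[
\frac1m\sum_{i,a}\pi_i(a)g_i(a)=\widehat V(\pi)-\mu\widehat C(\pi)+\lambda\widehat K(\pi).
\]
We subtract $\frac1m\sum_i\min_{a'}g_i(a')=\widehat V^*-\mu(1-\alpha)+\lambda B$, written as $\frac1m\sum_{i,a}\pi_i(a)\min_{a'}g_i(a')$. Rearranging gives \eqref{eq:certificate} term by term. Nonnegativity for feasible $\pi$ is immediate. The first term averages $g_i(a)-\min g_i\ge0$ under nonnegative weights. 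The second is $\mu\ge0$ times the coverage slack, and the third is $\lambda\ge0$ times the budget slack.

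For fixed PCP, the policy puts mass one on $(B,r)$ at every input. Its count equals $B$ exactly, so the budget term vanishes. Its radius meets coverage exactly, so the coverage term vanishes. What remains is the first term, which is $\widehat R_F$. This is a sum of nonnegative terms, so it is zero if and only if each $g_i(B,r)=\min_{a'}g_i(a')$, that is, if and only if $(B,r)$ is a cheapest choice at every input. Fixed PCP is then feasible, so the identity applies.

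The only delicate step is securing optimal multipliers that make the dual value equal $\widehat V^*$. Strong duality for LPs needs only primal feasibility and a finite optimum, and both hold here. We will not need Slater's condition. A fine point is that ``cheapest'' must be read as attaining the minimum, allowing ties.
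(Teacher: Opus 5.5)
Your proposal is correct and follows the paper's argument. Strong LP duality on the nonempty polytope \eqref{eq:allocation} gives multipliers with $D(\mu,\lambda)=\widehat V^*$; substituting $\widehat v_i(a)=g_i(a)+\mu\,\widehat c_i(a)-\lambda k(a)$ and using $\sum_a\pi_i(a)=1$ yields \eqref{eq:certificate} for every policy, and for fixed PCP the coverage and budget slacks vanish, leaving only $\widehat R_F$, which is nonnegative as long as $(B,r)\in\cA$ (a condition the paper also leaves implicit).
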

For a fixed comparator meeting both constraints with equality, the gain equals its average excess local Lagrangian cost. The same identity holds with population tables when optimal multipliers exist. The regret also converts into computation: the allocation matches fixed PCP's volume with $\Delta$ fewer draws per prediction whenever $\Delta\,\lambda(B-\Delta)\le\widehat R_F$, with $\lambda(B')$ the price at budget $B'$. The identity concerns the estimated program before calibration.

\paragraph{The two prices.}
The prices of Proposition~\ref{prop:gain} solve the dual of \eqref{eq:allocation}, which makes the program cheap to solve and gives the prices a meaning.
\begin{proposition}[Exact two-price dual]
\label{prop:dual}
Let $D(\mu,\lambda)=\mu(1-\alpha)-\lambda B+m^{-1}\sum_{i=1}^m\min_{k,r}g_i(k,r)$, and suppose \eqref{eq:allocation} is feasible. (i)~$D$ is concave and piecewise linear, $\widehat V^*=\max_{\mu,\lambda\ge0}D(\mu,\lambda)$ with the maximum attained, and a feasible policy is optimal if and only if each $\pi_i$ is supported on minimizers of $g_i$, the coverage constraint binds when $\mu>0$, and the budget binds when $\lambda>0$; some optimal policy is deterministic at all but at most two learning inputs. (ii)~For fixed $\mu$, minimizing $\widehat V(\pi)-\mu\widehat C(\pi)$ under the budget alone is solved exactly by water-filling: take at each input the best radius for every count, start every input at the smallest count, and add draws across inputs in decreasing order of the reduction in priced cost per draw, along the lower convex hull of each input's cost in the count, until the budget is spent or no further reduction is possible; $\lambda(\mu)$ is the reduction at the last draw added when the budget binds and zero otherwise. The coverage of this solution is nondecreasing in $\mu$, so bisection on $\mu$ over a bracket $[0,M]$ returns a feasible policy with volume at most $\widehat V^*+\alpha\epsilon$ after $O(\log(M/\epsilon))$ passes of $O(m|\cK||\cR|+m|\cK|\log(m|\cK|))$ operations each.
\end{proposition}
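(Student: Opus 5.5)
The plan is to treat \eqref{eq:allocation} as a finite linear program in the variables $\pi_i(a)$ and to apply standard LP duality, exploiting that only two constraints couple the inputs.

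\textbf{Part (i).} Form the Lagrangian with multipliers $\mu\ge0$ for coverage and $\lambda\ge0$ for the budget, keeping the simplex constraints on each $\pi_i$ explicit:
\[
L(\pi;\mu,\lambda)=\frac1m\sum_{i,a}\pi_i(a)\,g_i(a)+\mu(1-\alpha)-\lambda B .
\]
Minimizing over each simplex separately gives $\min_a g_i(a)$, hence $D(\mu,\lambda)$. As a pointwise minimum of finitely many affine functions of $(\mu,\lambda)$, $D$ is concave and piecewise linear.

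Because the primal is a feasible LP with a bounded objective (the $\widehat v\ge0$ are finite), LP strong duality gives $\widehat V^*=\max D$, with the maximum attained. No Slater condition is needed, since the constraints are linear.

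The optimality characterization then follows from complementary slackness, via the identity
\[
\widehat V(\pi)-D(\mu,\lambda)=\frac1m\sum_{i,a}\pi_i(a)\Bigl\{g_i(a)-\min_{a'}g_i(a')\Bigr\}+\mu\bigl\{\widehat C(\pi)-(1-\alpha)\bigr\}+\lambda\bigl\{B-\widehat K(\pi)\bigr\}.
\]
Every term on the right is nonnegative for a feasible $\pi$. So $\pi$ is optimal exactly when all three terms vanish: each $\pi_i$ is supported on minimizers of $g_i$, coverage binds when $\mu>0$, and the budget binds when $\lambda>0$.

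For the ``at most two non-deterministic inputs'' claim, take a basic optimal solution of the LP. There are $m$ simplex equalities plus two coupling constraints, so a vertex has at most $m+2$ positive coordinates. Since each $i$ needs at least one positive coordinate, at most two inputs can carry a second positive coordinate.

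\textbf{Part (ii).} Fix $\mu$ and first collapse the radius: set $\phi_i(k)=\min_r\{\widehat v_i(k,r)-\mu\widehat c_i(k,r)\}$. The remaining problem minimizes $\frac1m\sum_i\sum_k\pi_i(k)\phi_i(k)$ subject to $\frac1m\sum_{i,k}\pi_i(k)\,k\le B$. This is a separable resource-allocation LP. Over randomized $\pi_i$, the attainable (count, cost) pairs at input $i$ form the convex hull of $\{(k,\phi_i(k))\}$, and only its lower convex hull matters.

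Each lower hull is piecewise linear and convex in the expected count, with nonincreasing marginal reductions in cost per draw. A greedy, fractional-knapsack argument applies: take hull segments globally in decreasing order of reduction per draw, stop when the budget is exhausted or the reductions turn nonpositive, and split at most the last segment. By an exchange argument this is optimal. The price $\lambda(\mu)$ is the slope of the last segment taken, which certifies it as the budget multiplier.

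The operation counts come from two sources:
\begin{itemize}
\item Computing the $\phi_i$ costs $O(m|\cK||\cR|)$.
\item Building the hulls costs $O(m|\cK|)$, and sorting all segments costs $O(m|\cK|\log(m|\cK|))$.
\end{itemize}

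\textbf{Main obstacle: monotonicity and bisection.} The step I expect to be hardest is showing that coverage is nondecreasing in $\mu$. The plan is a revealed-preference argument. Let $\pi^{\mu}$ and $\pi^{\mu'}$ be solutions for $\mu<\mu'$; both are feasible for the same budget set. Adding the two optimality inequalities,
\[
\widehat V(\pi^{\mu})-\mu\widehat C(\pi^{\mu})\le \widehat V(\pi^{\mu'})-\mu\widehat C(\pi^{\mu'})
\]
and the mirror inequality at $\mu'$, gives $(\mu'-\mu)\{\widehat C(\pi^{\mu'})-\widehat C(\pi^{\mu})\}\ge0$. Ties require choosing tie-breaking consistently.

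Bisection then runs over $[0,M]$, where $M$ is chosen large enough that the solution at $M$ is coverage-feasible; such an $M$ exists by primal feasibility together with the strong duality of part (i). We keep the feasible endpoint $\mu_+$. The error bound comes from the identity above. Writing $\pi_+$ for the solution at $\mu_+$, we have $\widehat V(\pi_+)-\mu_+\widehat C(\pi_+)\le\widehat V^*-\mu_+(1-\alpha)$, since the optimum at $\mu_+$ is no larger than its value at the primal optimum. Comparing with the infeasible endpoint $\mu_-$, where $\mu_+-\mu_-\le\epsilon$, and using $\widehat C\le1$, the excess is bounded by $(\mu_+-\mu_-)\{\widehat C(\pi_+)-(1-\alpha)\}\le\epsilon\alpha$. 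I expect this last comparison to need the most care, because the jumps of the piecewise-linear path at breakpoints must be handled by allowing the mixing at the boundary.
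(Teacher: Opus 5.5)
Your part (i), the water-filling construction, and the monotonicity step are sound and follow the paper. Your revealed-preference inequality is the paper's supergradient argument for $q(\mu)=\min\{\widehat V(\pi)-\mu(\widehat C(\pi)-t):\widehat K(\pi)\le B\}$ with $t=1-\alpha$. It holds for any choice of minimizers, so no tie-breaking is needed.

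\textbf{The gap is the final error bound.} If you return the feasible endpoint $\pi_+$, the only upper bound you have is
\[
\widehat V(\pi_+)=q(\mu_+)+\mu_+\{\widehat C(\pi_+)-t\}\le\widehat V^*+\mu_+\{\widehat C(\pi_+)-t\}.
\]
Its factor is $\mu_+$, not $\mu_+-\mu_-$. Comparing with $\mu_-$ cannot repair this, because $\pi_+$ is only a competitor in the problem at $\mu_-$. That comparison gives the lower bound $\widehat V(\pi_+)\ge q(\mu_-)+\mu_-\{\widehat C(\pi_+)-t\}$.

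The claimed bound is in fact false for $\pi_+$. Take one input, a slack budget, two count--radius pairs of equal count with tables $(\widehat v,\widehat c)=(0,0)$ and $(1,1)$, and $t=\tfrac12$. Then $\widehat V^*=\tfrac12$. The priced cost selects the first pair for $\mu<1$ and the second for $\mu>1$, so every bracket with $\widehat C(\pi_{\mu_-})<t\le\widehat C(\pi_{\mu_+})$ straddles $\mu=1$. The feasible endpoint therefore has volume $1=\widehat V^*+\alpha$, however small $\epsilon$ is.

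\textbf{The missing step is to return a mixture of the two bracket solutions.} Set $\bar\pi=\theta\pi_{\mu_-}+(1-\theta)\pi_{\mu_+}$, with $\theta\in[0,1]$ chosen so that $\widehat C(\bar\pi)=t$. It meets the budget by linearity. Write $\widehat V(\pi_\mu)=q(\mu)+\mu\{\widehat C(\pi_\mu)-t\}$ at both endpoints, and note that
\[
\theta\{\widehat C(\pi_{\mu_-})-t\}=-(1-\theta)\{\widehat C(\pi_{\mu_+})-t\}=-\Delta,\qquad \Delta\in[0,\alpha].
\]
This gives
\[
\widehat V(\bar\pi)=\theta q(\mu_-)+(1-\theta)q(\mu_+)+(\mu_+-\mu_-)\Delta,
\]
and weak duality $q\le\widehat V^*$ finishes the proof. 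The coverage deficit at $\mu_-$ cancels the surplus at $\mu_+$, and this cancellation is what turns $\mu_+$ into $\mu_+-\mu_-$. Your closing remark about mixing points in this direction, but the argument as written bounds the wrong policy.

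Two smaller points:
\begin{itemize}
\item Treat $\widehat C(\pi_0)\ge t$ separately. In that case $\pi_0$ is exactly optimal, since $\widehat V(\pi_0)=q(0)\le\widehat V^*$, and there is no infeasible endpoint to bracket.
\item $\lambda(\mu)$ is minus the slope of the last hull segment used, that is, the reduction per draw, not the slope itself.
\end{itemize}
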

For \casa{}, which allocates counts only, part~(i) is Theorem~\ref{thm:dual}: introducing $u_i=\min_{k\in\cK}g_k(X_i)$ turns $\max_{\mu,\lambda\ge0}D(\mu,\lambda)$ into the linear program \eqref{eq:dualLP}, and its optimality condition says that an optimal allocation uses at each input only counts minimizing $g_k(X_i)$, which defines the rule $\widehat\kappa$.
Any exact method must read the $m|\cK||\cR|$ table entries, so the solver is optimal up to the logarithmic factor. Randomization is not what makes the allocation work: the optimal rule is deterministic except where mixing two choices meets a constraint exactly. The computation price is the marginal value of a draw: $\widehat V^*(B)$ is convex and nonincreasing in the budget with $-\lambda$ as a subgradient, so one more call per prediction saves at most $\lambda$ in estimated volume, and $\lambda=0$ when the budget goes unspent, as under \casa{} when the shared radius already covers the generator's mass at every input.

\paragraph{Proof of Propositions~\ref{prop:gain} and~\ref{prop:dual}(i).}
The Lagrangian of \eqref{eq:allocation} with multipliers $\mu,\lambda\ge0$ is
\[
 \mu t-\lambda B+\frac1m\sum_{i=1}^m\sum_{a\in\cA}\pi_i(a)\,g_i(a).
\]
Minimizing over each probability simplex places mass on minimizers of $g_i$ and gives the dual objective $D(\mu,\lambda)$. \emph{Concavity and duality.} Each $\min_a g_i(a)$ is a minimum of finitely many affine functions of $(\mu,\lambda)$, so $D$ is concave and piecewise linear. The feasible set of \eqref{eq:allocation} is a nonempty polytope, so linear-programming strong duality gives equal optimal values and a dual maximizer. \emph{Optimality.} For any $\mu,\lambda\ge0$ and any policy, substituting $\widehat v_i(a)=g_i(a)+\mu\widehat c_i(a)-\lambda k(a)$ and $\sum_a\pi_i(a)=1$ gives
\[
 \begin{aligned}
 \widehat V(\pi)-D(\mu,\lambda)
 &=\frac1m\sum_{i,a}\pi_i(a)
   \Bigl\{g_i(a)-\min_{a'}g_i(a')\Bigr\}\\
 &\quad+\mu\bigl\{\widehat C(\pi)-t\bigr\}\\
 &\quad+\lambda\bigl\{B-\widehat K(\pi)\bigr\}.
 \end{aligned}
\]
At a dual maximizer $D(\mu,\lambda)=\widehat V^*$ by~(i), which is \eqref{eq:certificate}. For a feasible policy the three terms are nonnegative, so it is optimal exactly when all three vanish, which is the stated condition, and the identity at the maximizer is Proposition~\ref{prop:gain}. For the deterministic-policy claim, adding a slack variable to each of the two inequality constraints puts \eqref{eq:allocation} in standard form with $m+2$ equality constraints. A bounded feasible linear program attains its optimum at a basic feasible solution, which has at most $m+2$ positive variables. Each simplex needs at least one positive $\pi_i(a)$, which leaves at most two further positive entries: either two inputs mix two actions or one input mixes three. \emph{Monotonicity in the prices.} Let $a$ minimize $g_i$ at $(\mu,\lambda)$ and $a'$ at $(\mu',\lambda)$ with $\mu'>\mu$. Adding the two optimality inequalities cancels the volume and count terms and leaves $(\mu'-\mu)\{\widehat c_i(a')-\widehat c_i(a)\}\ge0$. The same argument in $\lambda$ at fixed $\mu$ gives $(\lambda'-\lambda)\{k(a')-k(a)\}\le0$. \hfill$\square$

\paragraph{Proof of Proposition~\ref{prop:dual}(ii).}
Fix $\mu\ge0$, write $w_i(a)=\widehat v_i(a)-\mu\,\widehat c_i(a)$, and let $q(\mu)=\min\{\widehat V(\pi)-\mu(\widehat C(\pi)-t):\widehat K(\pi)\le B\}$, the Lagrangian of \eqref{eq:allocation} with only the coverage constraint priced. Its minimizers are those of the problem in the proposition. \emph{Radius.} Replacing each action $(k,r)$ at input $i$ by $(k,r_i(k))$, where $r_i(k)$ minimizes $w_i(k,\cdot)$, keeps every count and does not increase the objective, so it suffices to work with $M_i(k)=\min_r w_i(k,r)$, computed in one pass over the tables. \emph{Counts.} A distribution over counts at input $i$ with mean $\kappa$ costs at least $\breve M_i(\kappa)$, the lower convex envelope of $k\mapsto M_i(k)$ over the menu, and attains it by mixing the two hull vertices adjacent to $\kappa$. The problem is therefore $\min m^{-1}\sum_i\breve M_i(\kappa_i)$ subject to $m^{-1}\sum_i\kappa_i\le B$ and $\kappa_i\in[k_{\min},k_{\max}]$, a separable convex allocation of one resource. Starting every $\kappa_i$ at $k_{\min}$ and raising the $\kappa_i$ along the segments of the $\breve M_i$ in increasing order of slope, until the budget is spent or no segment has negative slope, satisfies its optimality conditions with multiplier $\lambda(\mu)$ equal to minus the slope of the last segment used: every segment used has slope at most $-\lambda(\mu)$, every segment not used has slope at least $-\lambda(\mu)$, and convexity makes each input's segments come in order. At most one segment is used partially, so at most one input mixes two counts. The hulls take $O(m|\cK|)$ operations, since the counts are sorted, and ordering the at most $m(|\cK|-1)$ segments takes $O(m|\cK|\log(m|\cK|))$.
\emph{Bisection.} Let $\pi_\mu$ be the water-filling solution. For every $\mu'$, $q(\mu')\le\widehat V(\pi_\mu)-\mu'\{\widehat C(\pi_\mu)-t\}=q(\mu)+(\mu'-\mu)\{t-\widehat C(\pi_\mu)\}$, so $t-\widehat C(\pi_\mu)$ is a supergradient of the concave function $q$. Supergradients of a concave function are nonincreasing, so $\widehat C(\pi_\mu)$ is nondecreasing in $\mu$, and linear-programming duality gives $\max_{\mu\ge0}q(\mu)=\widehat V^*$. If $\widehat C(\pi_0)\ge t$, then $\pi_0$ is feasible with $\widehat V(\pi_0)=q(0)\le\widehat V^*$ and is optimal. Otherwise, because \eqref{eq:allocation} is feasible, beyond the last breakpoint of $\mu\mapsto\pi_\mu$ the solution maximizes coverage under the budget, so $\widehat C(\pi_M)\ge t$ for a finite $M$, which doubling finds. Bisection keeps $\mu_{\rm lo}<\mu_{\rm hi}$ with $\widehat C(\pi_{\mu_{\rm lo}})<t\le\widehat C(\pi_{\mu_{\rm hi}})$ until $\mu_{\rm hi}-\mu_{\rm lo}\le\epsilon$. Choose $\theta\in[0,1]$ with $\theta\widehat C(\pi_{\mu_{\rm lo}})+(1-\theta)\widehat C(\pi_{\mu_{\rm hi}})=t$; the mixture $\bar\pi$ meets both constraints. With $\Delta=(1-\theta)\{\widehat C(\pi_{\mu_{\rm hi}})-t\}\in[0,\alpha]$,
\[
 \widehat V(\bar\pi)=\theta q(\mu_{\rm lo})+(1-\theta)q(\mu_{\rm hi})+(\mu_{\rm hi}-\mu_{\rm lo})\Delta\le\widehat V^*+\alpha\epsilon.
\]
Once the bracket contains no breakpoint of $\mu\mapsto\pi_\mu$ other than a maximizer $\mu^*$ of $q$, both bracket solutions minimize the priced objective at $\mu^*$, so $\bar\pi$ does too, meets the coverage target with equality, and is exactly optimal. The number of steps is $O(\log(M/\epsilon))$, and each costs $O(m|\cA|+m|\cK|\log(m|\cK|))$. Conversely, changing a single entry of the tables can change the optimal value, so in the worst case every exact method reads all $m|\cA|$ entries. \hfill$\square$

\paragraph{The budget path.}
Let $\widehat V^*(B)$ be the optimal value of \eqref{eq:allocation} at budget $B$ for a fixed action set. It is nonincreasing because the feasible sets are nested. It is convex because a mixture $\theta\pi^0+(1-\theta)\pi^1$ of optimal policies at budgets $B_0,B_1$ is feasible at $\theta B_0+(1-\theta)B_1$, the constraints being linear, and has the mixed volume. Let $(\mu,\lambda)$ be optimal prices at $B$ and $\pi'$ an optimal policy at $B'$. Applying \eqref{eq:certificate} at $B$ to $\pi'$, whose regret and coverage terms are nonnegative and whose mean count is at most $B'$, gives $\widehat V^*(B')-\widehat V^*(B)\ge\lambda(B-B')$, so $-\lambda$ is a subgradient. For each $\lambda\ge0$, the minimizing rule at $(\mu(\lambda),\lambda)$, mixed to meet the coverage target, satisfies the optimality conditions of Proposition~\ref{prop:dual} at the budget equal to its own mean count. By the identity above its volume equals $D(\mu,\lambda)$, so by weak duality it is optimal at that budget and $(\mu,\lambda)$ are optimal prices there. That budget is nonincreasing in $\lambda$ because subgradients of a convex function are monotone, and at the finitely many values of $\lambda$ where it jumps, mixing tied actions fills in the budgets between. One sweep of $\lambda$ therefore traces $\widehat V^*$ between the smallest feasible budget and the budget at which $\lambda$ reaches zero.

\paragraph{The gain over fixed PCP, and the draws it saves.}
If fixed PCP meets the coverage target exactly and spends the budget, the last two terms of \eqref{eq:certificate} vanish, so $\widehat V(\text{fixed})-\widehat V^*(B)=\widehat R_F$, its priced regret; by Proposition~\ref{prop:dual} this is zero exactly when $(B,r)$ minimizes every $g_i$. The derivation of \eqref{eq:certificate} uses only linearity in the policy and the two coupling constraints, so the same identity holds for the population program \eqref{eq:population} over a finite menu, with $\mathcal P_X$ in place of the empirical input distribution and population tables, whenever optimal population prices exist, as under Slater's condition. For the draws saved, let $\lambda(B-\Delta)$ be an optimal price at budget $B-\Delta$. The subgradient inequality of the budget path at $B-\Delta$ gives
\[
 \begin{aligned}
 \widehat V^*(B-\Delta)
 &\le\widehat V^*(B)+\Delta\,\lambda(B-\Delta)\\
 &=\widehat V(\text{fixed})-\widehat R_F
   +\Delta\,\lambda(B-\Delta),
 \end{aligned}
\]
so the allocation at budget $B-\Delta$ is no larger than fixed PCP at budget $B$ whenever $\Delta\,\lambda(B-\Delta)\le\widehat R_F$. When prices vary little over this range, the saving is about $\widehat R_F/\lambda$ draws per prediction: the regret measured in units of the value of a draw.

\subsection{Proofs for Section~\ref{sec:theory}}
\label{app:theory}
Write $\omega_d$ for the volume of the unit ball in $\bR^d$. Throughout, the radius is constant and $r(x)\equiv1$, so the score of $k$ samples is $S=\min_{j\le k}\|Y-\widehat Y_j\|_2$ and a sample's ball of radius $r$ contains $y$ exactly when $\|y-\widehat Y_j\|_2\le r$.

\paragraph{Coverage under separated modes.}
Under Assumption~\ref{ass:modes}, if some sample lies in the ball containing $Y$, both lie in one ball of radius $\epsilon$, so $S\le2\epsilon$. Otherwise every sample lies in another ball, because the generator is supported on the union of the balls, and every point of another ball is at distance at least $2(R+1)\epsilon-2\epsilon=2R\epsilon$ from $Y$, so $S\ge2R\epsilon$. Given $X=x$, the response lies in ball $j$ with probability $p_j(x)$, and each independent sample misses that ball with probability $1-\widehat p_j(x)$. Hence, with $c_k(x)$ as in \eqref{eq:modecov},
\begin{equation}
 \bP(S\le r\mid X=x)=c_k(x)\ \ \text{for } r\in[2\epsilon,2R\epsilon),\qquad
 \bP(S\le r\mid X=x)\le c_k(x)\ \ \text{for } r<2R\epsilon,
 \label{eq:geometry}
\end{equation}
which proves \eqref{eq:modecov} and shows that $c_k(x)$ is nondecreasing in $k$.

\paragraph{Proof of Theorem~\ref{thm:threshold}.}
For a count rule $\kappa$, averaging \eqref{eq:geometry} over $X$ and $\kappa(X)$ shows that the population score distribution $F_\kappa$ equals $\bE[c_{\kappa(X)}(X)]$ on $[2\epsilon,2R\epsilon)$ and is at most this value below $2R\epsilon$. The calibrated radius $\inf\{r:F_\kappa(r)\ge1-\alpha\}$ is therefore at most $2\epsilon$ if $\bE[c_{\kappa(X)}(X)]\ge1-\alpha$, and at least $2R\epsilon$ otherwise. For uniform allocation, $\bE[c_B(X)]<1-\alpha$ by the condition, so its radius is at least $2R\epsilon$; every set contains a ball of that radius, and $V_F\ge\omega_d(2R\epsilon)^d$. By the condition, some count rule $\kappa$ with $\bE[\kappa(X)]\le B$ has $\bE[c_{\kappa(X)}(X)]\ge1-\alpha$, so its radius is at most $2\epsilon$ and each set is a union of $\kappa(x)$ balls of radius at most $2\epsilon$, with expected volume at most $\bE[\kappa(X)]\,\omega_d(2\epsilon)^d\le B\,\omega_d(2\epsilon)^d$. The optimal allocation is no larger, so $V_A\le B\,\omega_d(2\epsilon)^d$, and dividing the two bounds gives $V_A/V_F\le B/R^d$.

\emph{Finite calibration.} With $n$ i.i.d.\ calibration points and $\psi=\bE[c_{\kappa(X)}(X)]$, the number of calibration scores at most $2\epsilon$ is $\mathrm{Bin}(n,\psi)$ by \eqref{eq:geometry}, and the calibrated radius is at most $2\epsilon$ if this number is at least $\lceil(n+1)(1-\alpha)\rceil$ and at least $2R\epsilon$ otherwise. Hence $\bP\{\widehat r>2\epsilon\}=\bP\{\widehat r\ge2R\epsilon\}=\bP\{\mathrm{Bin}(n,\psi)<\lceil(n+1)(1-\alpha)\rceil\}$, which tends to zero when $\psi>1-\alpha$ and to one when $\psi<1-\alpha$. \hfill$\square$

\paragraph{Proof of Corollary~\ref{cor:conditional}.}
At a constant radius $r\in[2\epsilon,2R\epsilon)$, \eqref{eq:geometry} gives input $x$ with $k$ samples conditional coverage $c_k(x)$, which is nondecreasing in $k$. Input $x$ therefore attains conditional coverage $1-\alpha$ exactly when $\kappa(x)\ge k_\alpha(x)$. Uniform allocation attains it at almost every input exactly when $B\ge\sup_xk_\alpha(x)$, where suprema over inputs are essential suprema, and the smallest mean count of a deterministic rule attaining it is $\bE[k_\alpha(X)]$, attained by $\kappa=k_\alpha$; the difference is the saving. With a correct generator and $J(x)$ equally weighted modes, $c_k(x)=1-\{1-1/J(x)\}^k$, so $k_\alpha(x)=\max\{k_{\min},\lceil\log(1/\alpha)/\log\{J/(J-1)\}\rceil\}$ with $J=J(x)$ and $k_{\min}=\min\cK$, and $1/J\le\log\{J/(J-1)\}\le1/(J-1)$ gives $(J-1)\log(1/\alpha)\le k_\alpha(x)\le J\log(1/\alpha)+1$ whenever $k_\alpha(x)>k_{\min}$. \hfill$\square$

\paragraph{Proof of Proposition~\ref{prop:oracle}.}
If $k<k_\alpha(x)$, then $c_k(x)<1-\alpha$, and by \eqref{eq:geometry} every radius below $2R\epsilon$ gives conditional coverage below $1-\alpha$; the smallest radius attaining the target is at least $2R\epsilon$, and the set contains a ball of that radius. Under uniform allocation with $B$ samples, the inputs with $k_\alpha(X)>B$ therefore have sets of volume at least $\omega_d(2R\epsilon)^d$, so $V_F\ge\bP\{k_\alpha(X)>B\}\,\omega_d(2R\epsilon)^d$. If $k\ge k_\alpha(x)$, \eqref{eq:geometry} gives conditional coverage $c_k(x)\ge1-\alpha$ at radius $2\epsilon$, so the smallest such radius is at most $2\epsilon$ and the union of $k$ balls has volume at most $k\,\omega_d(2\epsilon)^d$. The allocation $\kappa=k_\alpha$ has mean count $\bE[k_\alpha(X)]\le B$ and expected size at most $\bE[k_\alpha(X)]\,\omega_d(2\epsilon)^d\le B\,\omega_d(2\epsilon)^d$, which bounds the optimal allocation; dividing gives the ratio. \hfill$\square$

\subsection{Oracle Radii in the Interval Model}
\label{app:efficiency}
The next result isolates a benefit that remains when every input already has the smallest radius attaining the target conditional coverage.
\begin{proposition}[Allocation with oracle radii]
\label{prop:sqrt}
Suppose $\mathcal P_x=\widehat f(x)$ is uniform on an interval of length $s(x)$, with $0<s_{\min}\le s(x)\le s_{\max}$. For each input and count, choose the smallest radius attaining conditional coverage $1-\alpha$. With inactive count caps, allocate counts by randomized rounding of
\[
 \kappa^*(x)=\frac{B\sqrt{s(x)}}{\bE\sqrt{s(X)}}
\]
to adjacent integers, preserving its conditional mean. As integer $B\to\infty$, the expected lengths under fixed counts and this allocation satisfy
\begin{equation*}
 V_F(B)-V_{\mathrm{sqrt}}(B)
 =\frac{A_\alpha}{B}\,\mathrm{Var}\{\sqrt{s(X)}\}+O(B^{-2}),
\end{equation*}
where $A_\alpha=\log(1/\alpha)-2+2\sqrt\alpha>0$.
\end{proposition}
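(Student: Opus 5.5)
The plan is to reduce everything to a single universal function of the count by scaling. Because $\mathcal P_x=\widehat f(x)$ is uniform on an interval of length $s(x)$, I would map that interval affinely onto $[0,1]$. The $k$ samples then become $k$ i.i.d.\ uniforms, the response becomes an independent uniform, and a radius $\rho$ becomes $\rho/s(x)$. Conditional coverage is invariant under this map, so the smallest radius attaining $1-\alpha$ is $s(x)\rho_k$. Here $\rho_k$ is the oracle radius for the unit interval with $k$ samples; it exists and is unique because coverage is continuous and strictly increasing in $\rho$ up to $1$. The expected set length at $x$ is then $s(x)\,\ell_k$, where $\ell_k$ is the expected length of the union of $k$ intervals of radius $\rho_k$ around uniform points. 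This gives
\[
V_F(B)=\bE[s(X)]\,\ell_B,\qquad V_{\mathrm{sqrt}}(B)=\bE\bigl[s(X)\,\ell_{\kappa(X)}\bigr],
\]
so the whole proposition rests on an expansion $\ell_k=(1-\alpha)+A_\alpha/k+O(k^{-2})$.

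To obtain that expansion I would use spacings. Let $D_0,\dots,D_k$ be the Dirichlet$(1,\dots,1)$ spacings of the $k$ ordered samples, with $D_0$ and $D_k$ the two boundary gaps. A point is uncovered only if it lies in an interior gap at distance more than $\rho$ from both ends, or in a boundary gap at distance more than $\rho$ from the adjacent sample. This gives
\[
1-c=\bE\Bigl[\textstyle\sum_{i=1}^{k-1}(D_i-2\rho)_+ +(D_0-\rho)_+ +(D_k-\rho)_+\Bigr],
\]
\[
\ell=2\rho+\bE\Bigl[\textstyle\sum_{i=1}^{k-1}\min(D_i,2\rho)\Bigr].
\]
Each $D_i$ is Beta$(1,k)$, which yields closed forms such as $\bE(D-a)_+=(1-a)^{k+1}/(k+1)$. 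I would write $\rho=t_k/k$ and expand $(1-2t/k)^{k+1}=e^{-2t}\{1+O(1/k)\}$. The coverage equation $1-c=\alpha$ then gives $e^{-2t_0}=\alpha$, that is $t_0=\tfrac12\log(1/\alpha)$, together with an explicit first-order correction $t_1$ coming from the boundary terms $2e^{-t_0}/k=2\sqrt\alpha/k$ and the $1/k$ corrections of the interior terms. Substituting into $\ell$, and using $\min(D,2\rho)=D-(D-2\rho)_+$, gives
\[
\ell_k=1-\bE[(D_0-\rho)_++(D_k-\rho)_+]-(1-c)+2\rho-\bE[D_0+D_k]+\dots
\]
The boundary contributions, $2\rho\approx\log(1/\alpha)/k$, $-2/k$ from the two boundary spacings, and $+2\sqrt\alpha/k$ from their uncovered parts, should combine into exactly $A_\alpha/k=\{\log(1/\alpha)-2+2\sqrt\alpha\}/k$ with a uniform $O(k^{-2})$ remainder. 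This expansion, and in particular tracking which $1/k$ terms cancel once coverage is pinned at $1-\alpha$, is the main obstacle. I would do it carefully with the exact Beta identities rather than the Poisson heuristic.

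With the expansion in hand, the comparison is elementary. The constant $1-\alpha$ contributes $(1-\alpha)\bE[s]$ to both $V_F$ and $V_{\mathrm{sqrt}}$ and cancels. Fixed counts contribute $A_\alpha\bE[s]/B$. For the allocation, the bound $s_{\min}\le s\le s_{\max}$ makes $\kappa^*(x)\asymp B$ uniformly, so the remainder is $O(B^{-2})$ uniformly. Randomized rounding to adjacent integers preserves the mean and changes $\bE[1/\kappa\mid X]$ from $1/\kappa^*$ by at most $O(\kappa^{*-3})$, by a Taylor expansion of the convex function $1/k$ with rounding variance at most $1/4$. Hence
\[
\bE[s/\kappa]=\bE\bigl[s\,\bE\sqrt s/(B\sqrt s)\bigr]+O(B^{-3})=(\bE\sqrt s)^2/B+O(B^{-3}).
\]
Subtracting the two contributions gives
\[
V_F-V_{\mathrm{sqrt}}=\frac{A_\alpha}{B}\Bigl\{\bE s-(\bE\sqrt s)^2\Bigr\}+O(B^{-2})=\frac{A_\alpha}{B}\mathrm{Var}\{\sqrt{s(X)}\}+O(B^{-2}).
\]
Finally, $A_\alpha>0$ holds because $\phi(\alpha)=\log(1/\alpha)-2+2\sqrt\alpha$ satisfies $\phi(1)=0$ and $\phi'(\alpha)=-1/\alpha+1/\sqrt\alpha<0$ on $(0,1)$.
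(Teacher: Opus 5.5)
Your proposal is correct and is essentially the paper's argument. You rescale to $[0,1]$ and use the exact Beta (miss-probability) formulas. Pinning coverage at $1-\alpha$ makes the in-support length exactly $1-\alpha$, so the $1/k$ term comes only from the boundary pieces $2\rho_k-2/(k+1)+2(1-\rho_k)^{k+1}/(k+1)$. The rounding is then controlled through the convexity of $1/k$.

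Two small corrections to your display for $\ell_k$. The boundary term $\bE[(D_0-\rho)_++(D_k-\rho)_+]$ should enter with a plus sign, which your verbal accounting already assumes, and the formula is exact, so the trailing ``$+\dots$'' should go. The first-order correction $t_1$ is also unnecessary: it affects $2\rho_k$ only at order $k^{-2}$, so $\rho_k=\log(1/\alpha)/(2k)+O(k^{-2})$ suffices.
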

Both rules have the same conditional coverage and average count. The leading gain is proportional to $\mathrm{Var}\{\sqrt{s(X)}\}$, quantifying the role of scale heterogeneity. Together with Theorem~\ref{thm:threshold}, it distinguishes large gains from crossing a representation threshold and smooth $1/B$ gains from reducing residual excess volume after local radius adaptation.

\paragraph{Proof of Proposition~\ref{prop:sqrt}.}
For the oracle radius at input $x$ and count $k$, we first establish the expansion
\[
 v_k(x;r_k(x))=s(x)\{1-\alpha+A_\alpha/k+O(k^{-2})\}.
\]
The remainder is uniform over the bounded scale family in the proposition. Minimizing the leading excess term under $\bE K(X)=B$ gives the square-root allocation.

Normalize the support to $[0,1]$. Integrating the probability that all $k$ centers miss a radius-$t$ neighborhood gives, for $0\le t\le1$,
\[
 1-c(k,t)=\frac{2(1-t)^{k+1}+(k-1)(1-2t)_+^{k+1}}{k+1}.
\]
The expected covered length inside the support equals $c(k,t)$ because the density is one, and each end contributes expected length $t-\{1-(1-t)^{k+1}\}/(k+1)$ outside it, from integrating $1-(1-u)^k$ over $u\in[0,t]$. Setting $t=z/k$ gives $1-c(k,z/k)=e^{-2z}+O(1/k)$ uniformly on compact $z$-intervals, with derivative in $z$ bounded away from zero near $z=\tfrac12\log(1/\alpha)$, so $t_k=\log(1/\alpha)/(2k)+O(k^{-2})$; substituting into the outside length gives $A_\alpha$, and $A_\alpha=2(z-1-\log z)>0$ with $z=\sqrt\alpha$. By the Cauchy--Schwarz inequality, $\bE\{s(X)/K(X)\}\,\bE K(X)\ge\{\bE\sqrt{s(X)}\}^2$, with equality for $K^*$. Bounded scales keep $K^*$ proportional to $B$, and randomized rounding preserves the mean count and changes $\bE\{1/K\mid X\}$ by $O(B^{-3})$. The leading excess is $A_\alpha\{\bE\sqrt s\}^2/B$ for $K^*$ against $A_\alpha\bE s/B$ for $K\equiv B$, and their difference is the claim. \hfill$\square$

\section{Experimental Details}
\label{app:experiments}

\subsection{Implementation}
\label{app:implementation}

\paragraph{Data and generators.}
The synthetic study uses a conditional quantile network with two hidden layers of 64 ReLU units, which outputs the quantiles at levels $0.01,\ldots,0.99$ and is trained with the pinball loss once per DGP on 2,000 observations; a sample interpolates its quantile curve at a uniform level. Each seed draws 500 observations for $\mathcal D_m$, 500 for $\mathcal D_n$, 1,000 for testing, and 4,000 for the conditional-coverage audit. Every method uses the same 1,000 labeled observations: methods without a learned component calibrate on $\mathcal D_m\cup\mathcal D_n$, and the others learn on $\mathcal D_m$ and calibrate on $\mathcal D_n$.

\paragraph{Conditional regression.}
We estimate $c_k$ and $v_k$ at a new input by nonparametric nearest-neighbor regression. After standardizing the features, $\widehat c_k(x)$ and $\widehat v_k(x)$ average the per-input estimates of the $60$ nearest points of $\mathcal D_m$, with the same weights for every count and radius, which preserves the monotone and concave shape of the estimated curves in $k$ (Proposition~\ref{prop:rb}). The features are the input coordinates for the synthetic studies and sample-free summaries of the generator for the real data. For a nearest-neighbor generator, these are the log mean distance of the neighbors' responses to their coordinatewise median, the fractions within $0.05$ and $0.2$ standardized units of it, and the log of the 90th percentile of those distances; for a mixture density network, they are the log total variance, the entropy and maximum of the mixture weights, and the log mean component scale. They are fixed functions of the input, so Theorem~\ref{thm:validity} applies unchanged. In a sensitivity analysis on the synthetic DGPs at $B=5$, using $30$ or $120$ neighbors instead of $60$, or averaging within a fixed partition of the input space, changes the mean set size by less than $1\%$.

\paragraph{Volume estimation.}
The estimator \eqref{eq:rb} needs only the observed response and the sample pool, not the generator density. In one dimension, $\widehat h_k(X_i;y)$ is constant between consecutive endpoints $\widehat Y_{i,j}\pm r$, so $\widehat v_k(X_i)$ is computed exactly by summing segment lengths weighted by $\widehat h_k$. In two dimensions, we draw integration points by choosing a pool center uniformly and then a point uniformly in its radius-$r$ disk. This proposal has density $H(X_i;y)/(N\pi r^2)$ on the union of the disks, so
\begin{equation*}
 \widehat v_k(X_i)
 =N\pi r^2\,\bE_U\left[\frac{\widehat h_k(X_i;U)}{H(X_i;U)}\,\middle|\,\widehat Y_{i,1:N}\right],
\end{equation*}
estimated for all counts from one set of 256 points. Test-set areas are computed exactly by disk-union boundary integration.

\paragraph{Sampling, optimization, and budget.}
Each input in $\mathcal D_m$ has a pool of $N=100$ samples, the candidate counts are $\{2,\ldots,2B-2\}$, and the radius grid has 80 geometrically spaced values chosen from $\mathcal D_m$ and its pool. The allocation is computed by the exact dual of Theorem~\ref{thm:dual} (Appendix~\ref{app:dual}). Calibration and test inputs use fresh samples. The budget caps the mean number of prediction-time generator samples per input, including auxiliary samples; the offline pool on $\mathcal D_m$ is not counted in $B$. Because the allocation enforces the budget as an average over $\mathcal D_m$, we report the realized mean number of samples on test inputs.

\paragraph{Hardware.}
Experiments ran on three machines: a MacBook Pro with an Apple M2 Pro chip (12 CPU cores, 32\,GB memory), a MacBook Air with an Apple M4 chip (10 CPU cores, 16\,GB memory), and a desktop workstation with an NVIDIA GeForce RTX 5070 Ti GPU. The synthetic and real-data sweeps run on CPU and were divided across the three machines by random seed; GPU computation, such as training the mixture density network for the taxi data and sampling answers from the vision-language model, used the workstation.

\paragraph{Evaluation metrics.}
Marginal coverage and mean set size are computed on the test observations, and cost is the realized mean number of generator samples per test input. For conditional coverage, let $\mathcal B_j$ be input cells, with audit count $n_j$, audit coverage $\widehat c_j$, and weight $\omega_j=n_j/n_{\mathrm{audit}}$. We report
\[
 \mathrm{MACE}=\sum_j\omega_j|\widehat c_j-(1-\alpha)|,
\]
which averages over cells and is distinct from pointwise conditional coverage. The cells are input-only recursive median partitions fitted on $\mathcal D_m$, with 10 cells for scalar inputs and 16 for two-dimensional inputs. Intervals are 95\% intervals across seeds.

\subsection{Compared Methods}
\label{app:planned}
The PCP family differs only in which decisions vary (Section~\ref{sec:formulation}): fixed PCP uses $K\equiv B$ and $r(x)\equiv1$; \casa{} lets the count vary with one shared radius: it solves the allocation at each radius of the grid $\cR$ and keeps the radius whose solution has the smallest estimated mean set size on $\mathcal D_m$. \casa{} (joint) lets each input choose both its count and its radius from $\cR$, by the same exact dual over count--radius pairs (Appendix~\ref{app:dual}), with the budget applying to the counts. Input-independent allocation draws every count from one distribution that does not depend on the input, under the same budget cap and with a shared radius; in the ablation of Section~\ref{sec:experiments}, the counts are uniform on $\{2,\ldots,8\}$ with one calibrated radius. In that ablation, size is the geometric mean over DGPs of the ratio to fixed PCP, and coverage and MACE are means over DGPs.

The external constructions follow their published definitions and use the same generator and the conformal rank $\lceil(m+1)(1-\alpha)\rceil$, with an unbounded set when the rank exceeds $m$. Let $\widehat Y_1,\ldots,\widehat Y_K\sim G_x$ be the centers at input $x$ and $V(x,y)=\min_k\|y-\widehat Y_k\|$ the PCP score.
\begin{itemize}
\item \textbf{CP$^2$} \citep{plassier2025cp2}, linear adjustment. An independent auxiliary sample $\widetilde Y_1,\ldots,\widetilde Y_A\sim G_x$ defines $\widehat\sigma(x)$ as the $\lceil A(1-\alpha)\rceil$-th smallest of $V(x,\widetilde Y_j)$, $j\le A$. The score is $V(x,y)/\widehat\sigma(x)$, and the set is $\bigcup_k B(\widehat Y_k,\widehat q\,\widehat\sigma(x))$. Because $\widehat\sigma$ is recomputed from the realized centers, it adapts to any count. The auxiliary size $A=\lfloor B/2\rfloor$ is the same at every input and counted toward the budget, so fixed CP$^2$ uses $B-A$ centers and its count-allocated version satisfies $\mathbb E[K(X)]\le B-A$.
\item \textbf{RCP} \citep{plassier2025rcp}, linear adjustment as in its PCP experiments. A quantile regression fitted on $\mathcal D_m$ estimates $\widehat\sigma(x)$, the conditional $(1-\alpha)$-quantile of $V(X,Y)$ given $X=x$; the score and set are as for CP$^2$. Since the nearest-center distance decreases with the count, the count-allocated version estimates $\widehat\sigma(x,k)$ with the count as an additional regressor.
\item \textbf{CP4Gen} \citep{yang2026cp4gen}. $k$-means clustering of the centers into $L$ groups (solved exactly by dynamic programming for scalar responses) gives weights $w_\ell$ (cluster proportions), means $\mu_\ell$, and sample covariances $\Sigma_\ell$ (zero for singletons). The score is $-\log\max_\ell w_\ell\,\mathcal N(y;\mu_\ell,\Sigma_\ell+\beta^2 I)$ with $\beta^2=10^{-6}$, and the set is a union of ellipsoids. The number of clusters is chosen on $\mathcal D_m$ at $K=B$ to minimize mean volume over the grid of the official implementation, whose largest value $L=B$ makes every sample its own cluster. In the count-allocated version an input with $k$ samples uses $\min(L,k)$ clusters, and the choice $L=B$ carries over as one cluster per sample.
\item \textbf{C-PCP} \citep{dheur2025unified}. The score is the empirical distribution function, over $A$ auxiliary draws, of the nearest-center distance, $\widehat F_x(V(x,y))=A^{-1}\sum_j\one\{V(x,\widetilde Y_j)\le V(x,y)\}$, and the set is the union of balls whose radius is the corresponding auxiliary order statistic. The score takes only $A+1$ values, so with $A=\lfloor B/2\rfloor$ at the budgets studied here the $90\%$ threshold reaches one and every set is the whole space. We therefore give C-PCP $A=20$ auxiliary draws beyond the budget, identical for its fixed and count-allocated versions, so that the comparison isolates the allocation of centers.
\end{itemize}
The Rao--Blackwell tables of Proposition~\ref{prop:rb} assume balls with a common radius, so for the external constructions the coverage and volume of each count are estimated by direct Monte Carlo over collections of pool samples; the allocation and calibration steps are unchanged.

\paragraph{Budget curves.}
Figure~\ref{fig:baselines_rare} compares each construction with and without \casa{} on rare modes over budgets $B=3,\ldots,10$. Budgets are matched within each construction; CP$^2$ spends $\lfloor B/2\rfloor$ of its budget on auxiliary draws and C-PCP uses 20 auxiliary draws beyond it, so the horizontal axis counts realized calls, auxiliary draws included. Marginal coverage stays within $[89.6,90.5]\%$ for every construction except C-PCP ($91.5$--$93.2\%$), whose discrete score is conservative.

\begin{figure*}[t]
\centering
\includegraphics[width=\textwidth]{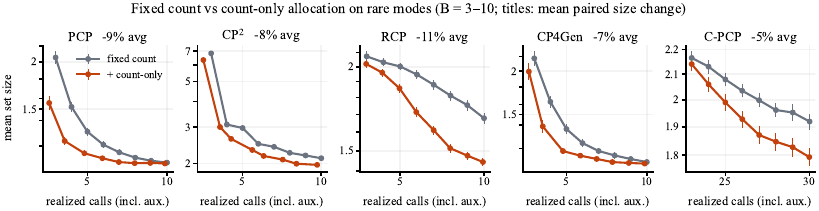}
\caption{Fixed count versus \casa{} on rare modes for $B=3,\ldots,10$, with 95\% intervals over 100 seeds. The horizontal axis counts realized samples, auxiliary draws included; panel titles give the mean size change.}
\label{fig:baselines_rare}
\end{figure*}

\subsection{Synthetic DGPs}
\label{app:synthetic}
\paragraph{DGPs.}
Each DGP draws $X\sim\mathrm{Unif}(0,1)$ and $Y=0.3\sin(2\pi X)+Z$, where the noise $Z$ changes with the input as follows, with $\varepsilon\sim\mathcal N(0,1)$, $U\sim\mathrm{Unif}(-1,1)$, $M$ uniform on $\{-1.5,-0.5,0.5,1.5\}$, $T\sim t_5$, the logistic function $\ell(t)=1/(1+e^{-t})$, and all variables independent.
\begin{center}
\small
\setlength{\tabcolsep}{8pt}
\begin{tabular}{ll}\toprule
DGP & Noise $Z$ \\
\midrule
Compact to extended & $a(X)\,U+0.02\,\varepsilon$, \quad $a(X)=0.08$ if $X<0.5$, else $1$ \\
Uniform bulk & $a(X)\,U+0.02\,\varepsilon$, \quad $a(X)=0.25$ if $X<0.5$, else $1$ \\
Gaussian scale & $s(X)\,\varepsilon$, \quad $s(X)=0.12$ if $X<0.5$, else $0.8$ \\
Sharp scale & $s(X)\,\varepsilon$, \quad $s(X)=0.06$ if $X<0.75$, else $0.45$ \\
Emerging modes & $\{0.02+0.98\,\ell(16(X-0.5))\}\,M+0.04\,\varepsilon$ \\
Separating modes & $\pm\{0.02+0.98\,\ell(12(X-0.5))\}+0.06\,\varepsilon$, each sign equally likely \\
Rare modes & $\one\{X\ge0.8\}\,M+0.06\,\varepsilon$ \\
Mode width & $M+s(X)\,\varepsilon$, \quad $s(X)=0.025$ if $X<0.5$, else $0.15$ \\
Heavy tail & $0.35\,T$ with probability $0.02+0.28\,\ell(12(X-0.6))$, else $0.06\,\varepsilon$ \\
\bottomrule\end{tabular}
\end{center}
The learner observes only $X$ and the labeled data.

\paragraph{Protocol.}
We use budgets $B=3,\ldots,15$ with 100 seeds each and target coverage $90\%$. Each input in $\mathcal D_m$ has a pool of $N=100$ samples, and both variants use the same grid $\cR$ of 80 radius values.

\paragraph{Uncertainty-set geometry.}
Figure~\ref{fig:synthetic_sets} visualizes the calibrated uncertainty sets of fixed PCP, \casa{}, and \casa{} (joint) in one replication on two scalar and two vector DGPs. These individual realizations illustrate the geometry of the sets; aggregate comparisons are in Figure~\ref{fig:frontiers} and Table~\ref{tab:baselines}.

\begin{figure}[p]
\centering
\includegraphics[width=\linewidth]{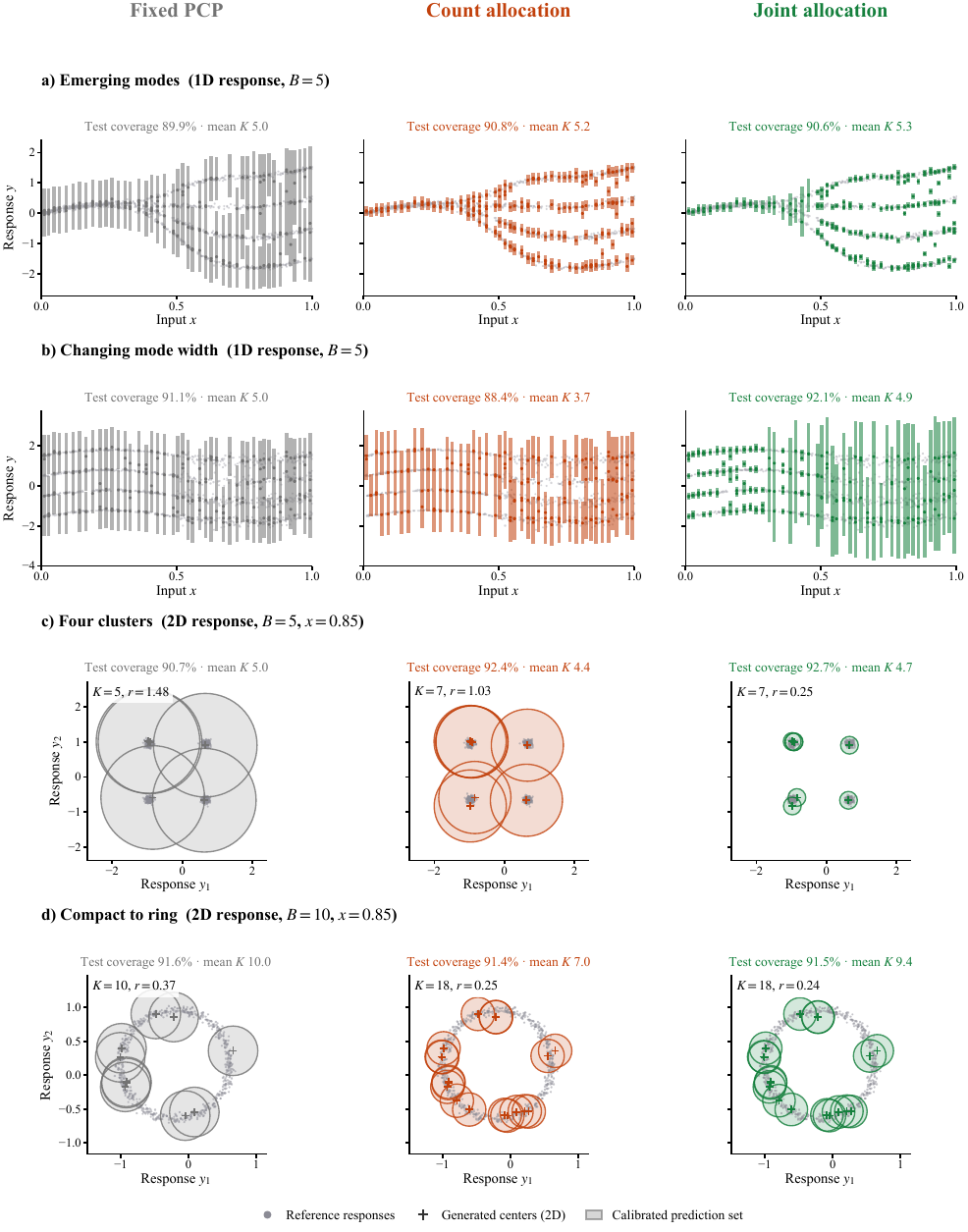}
\caption{Calibrated uncertainty sets on four synthetic DGPs for fixed PCP, \casa{}, and \casa{} (joint). (a--b)~Gray points are test pairs, and colored segments are the sets. (c--d)~Gray points are responses at one input, crosses are samples, and circles outline the sets. Headings give marginal coverage and mean samples.}
\label{fig:synthetic_sets}
\end{figure}

\subsection{Taxi Routes}
\label{app:porto}
\paragraph{Data and task.}
The Porto taxi data \citep{porto2015trajectory}, released for the ECML/PKDD 2015 trajectory-prediction challenge, record 1.7 million trips with one GPS fix every 15\,s. We use the first 300,000 trips, drop trips with missing data or a jump of more than 500\,m between fixes, and keep windows in an $8\times6$\,km box around the city centre in which the taxi is moving. From a taxi's current position and its displacements over the last 15\,s and 60\,s, we predict its displacement to the point 300\,m further along its route. Measuring the horizon in distance rather than time removes speed and leaves route choice: on a straight road the answer is nearly a single point, while at a junction it splits into one point per exit. Splits are by trip, and each evaluation trip contributes one random window: 2,000 trips for $\mathcal D_m$, 2,000 for $\mathcal D_n$, 3,000 for testing, and 3,000 for the conditional-coverage audit; the remaining trips supply two million generator windows. Set size is area in square metres.

\paragraph{Generator.}
The main generator is a trajectory library: a sample is the 300\,m displacement of one of the 20 training windows nearest in position and recent motion, with Gaussian jitter of about 4\,m. As a learned alternative, a mixture density network maps random Fourier features of the position and the recent displacements to a 20-component Gaussian mixture, trained by maximum likelihood on each split's generator windows.

\paragraph{Results.}
Over ten splits, \casa{} keeps coverage at $90.3$--$90.4\%$ (fixed PCP $90.1$--$90.2\%$) and reduces mean set area to 0.68, 0.63, and 0.88 of fixed PCP's at budgets 3, 5, and 10, and to 0.60 and 0.80 at equal realized samples at budgets 5 and 10 (Table~\ref{tab:porto}); \casa{} (joint) gives 0.58, 0.56, and 0.85. Taxis on through-roads receive two samples and taxis near junctions up to eight (Figure~\ref{fig:porto}). The extra samples cover exits that fixed PCP misses and lower the shared calibrated radius, from 63 to 44\,m on one split, so sets shrink at every sample level. The gain holds with 10 or 40 neighbours, for route distances of 200 and 500\,m, under a time split in which the generator sees only earlier trips, and with the mixture density network.

\begin{table*}[t]
\centering\small
\caption{Taxi routes, all configurations: mean set area relative to fixed PCP over ten splits, at the same budget and at equal realized samples. Bold: paired $t$-test $p<0.05$. Last column: \casa{}'s mean samples at $B=3,5,10$.}
\label{tab:porto}
\setlength{\tabcolsep}{3.2pt}
\begin{tabular}{lccccccccc}\toprule
& \multicolumn{3}{c}{\casa{}, same cap} & \multicolumn{2}{c}{Equal samples} & \multicolumn{3}{c}{\casa{} (joint), same cap} & \\
\cmidrule(lr){2-4}\cmidrule(lr){5-6}\cmidrule(lr){7-9}
Configuration & $B{=}3$ & $5$ & $10$ & $5$ & $10$ & $3$ & $5$ & $10$ & Mean $K$ \\
\midrule
20-NN, 300\,m (main) & \textbf{0.68} & \textbf{0.63} & \textbf{0.88} & \textbf{0.60} & \textbf{0.80} & \textbf{0.58} & \textbf{0.56} & \textbf{0.85} & 3.0/4.9/9.7 \\
10-NN, 300\,m & \textbf{0.69} & \textbf{0.63} & \textbf{0.90} & \textbf{0.60} & \textbf{0.78} & \textbf{0.56} & \textbf{0.52} & 0.91 & 3.0/4.9/9.5 \\
40-NN, 300\,m & \textbf{0.67} & \textbf{0.62} & \textbf{0.86} & \textbf{0.57} & \textbf{0.77} & \textbf{0.54} & \textbf{0.55} & \textbf{0.85} & 3.0/4.8/9.7 \\
20-NN, 300\,m, time split & \textbf{0.67} & \textbf{0.64} & \textbf{0.90} & \textbf{0.60} & \textbf{0.84} & \textbf{0.51} & \textbf{0.54} & \textbf{0.85} & 2.9/4.9/9.5 \\
20-NN, 200\,m & \textbf{0.62} & \textbf{0.73} & 0.94 & \textbf{0.66} & \textbf{0.84} & \textbf{0.52} & \textbf{0.66} & \textbf{0.89} & 2.9/4.8/8.7 \\
20-NN, 500\,m & \textbf{0.90} & \textbf{0.66} & \textbf{0.76} & \textbf{0.66} & \textbf{0.73} & \textbf{0.73} & \textbf{0.59} & \textbf{0.77} & 2.9/5.0/9.9 \\
MDN, 300\,m & \textbf{0.76} & \textbf{0.70} & 1.00 & \textbf{0.69} & 0.94 & \textbf{0.59} & \textbf{0.61} & \textbf{0.93} & 3.0/5.0/9.6 \\
\bottomrule\end{tabular}
\end{table*}

\subsection{Protein Backbone Angles}
\label{app:protein}
\paragraph{Data and task.}
We predict a residue's backbone dihedral angles $(\phi,\psi)$, its position on the Ramachandran plot, from the sequence of its chain. We use the CATH 4.2 chains of \citet{ingraham2019generative}, split by CATH topology so that no test chain has a homologous chain in training, and compute the angles from the backbone coordinates of N, C$_\alpha$, and C, with $\psi$ shifted to $[-110^\circ,250^\circ)$ so that the helical and extended regions are contiguous. Evaluation residues are drawn at random from the test chains, 3,000 each for $\mathcal D_m$, $\mathcal D_n$, testing, and the conditional-coverage audit. Set size is area in square degrees.

\paragraph{Generator.}
Each residue is represented by the last-layer embedding of ESM-2 with 35 million parameters \citep{lin2023esm}, which sees the whole chain, projected onto 64 principal components. A library holds the 1.3 million residues of 6,000 random training chains, and a sample is the $(\phi,\psi)$ of one of the 20 library residues closest in embedding.

\paragraph{Results.}
Over ten splits, \casa{} reduces mean set area to 0.76, 0.87, and 0.94 of fixed PCP's at budgets 3, 5, and 10, and to 0.76 and 0.86 at equal realized samples (Table~\ref{tab:protein}), and lowers MACE from 2.85 to 2.58 points at $B=5$. It assigns 3.8 samples on average to residues in the helical region, 4.8 to the extended region, and 5.8 to residues with $\phi>0$, against five for fixed PCP (Figure~\ref{fig:protein}). The gain holds for 10 and 40 neighbours and for the ESM-2 model with 150 million parameters.

\begin{figure*}[t]
\centering
\includegraphics[width=\textwidth]{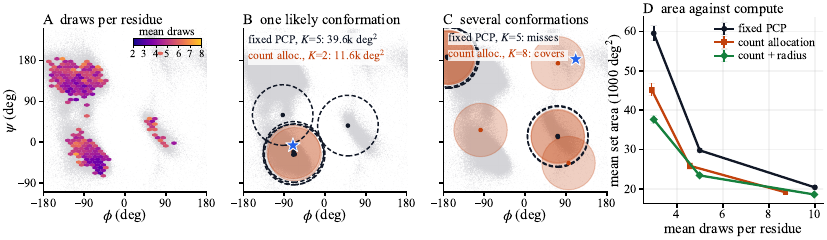}
\caption{Protein backbone angles at $B=5$. (A)~Mean samples \casa{} assigns, by position on the Ramachandran plane. (B)~A helical residue: fewer samples give a smaller set. (C)~A rare conformation that fixed PCP's samples miss. (D)~Set area against mean samples over ten splits.}
\label{fig:protein}
\end{figure*}

\begin{table*}[t]
\centering\small
\caption{Protein backbone angles, all configurations: mean set area relative to fixed PCP over ten splits, at the same budget and at equal realized samples. Bold: paired $t$-test $p<0.05$. Last column: \casa{}'s mean samples at $B=3,5,10$.}
\label{tab:protein}
\setlength{\tabcolsep}{3.2pt}
\begin{tabular}{lccccccccc}\toprule
& \multicolumn{3}{c}{\casa{}, same cap} & \multicolumn{2}{c}{Equal samples} & \multicolumn{3}{c}{\casa{} (joint), same cap} & \\
\cmidrule(lr){2-4}\cmidrule(lr){5-6}\cmidrule(lr){7-9}
Configuration & $B{=}3$ & $5$ & $10$ & $5$ & $10$ & $3$ & $5$ & $10$ & Mean $K$ \\
\midrule
20-NN, ESM-2 35M (main) & \textbf{0.76} & \textbf{0.87} & \textbf{0.94} & \textbf{0.76} & \textbf{0.86} & \textbf{0.63} & \textbf{0.79} & \textbf{0.91} & 2.9/4.6/8.7 \\
10-NN & \textbf{0.71} & \textbf{0.88} & \textbf{0.97} & \textbf{0.80} & \textbf{0.89} & \textbf{0.62} & \textbf{0.81} & \textbf{0.94} & 3.0/4.7/8.7 \\
40-NN & \textbf{0.81} & \textbf{0.86} & \textbf{0.94} & \textbf{0.80} & \textbf{0.87} & \textbf{0.69} & \textbf{0.81} & \textbf{0.90} & 2.9/4.7/8.8 \\
20-NN, ESM-2 150M & \textbf{0.80} & \textbf{0.89} & \textbf{0.92} & \textbf{0.79} & \textbf{0.88} & \textbf{0.68} & \textbf{0.81} & \textbf{0.92} & 2.9/4.5/9.2 \\
\bottomrule\end{tabular}
\end{table*}

\begin{figure}[H]
\centering
\includegraphics[width=\textwidth]{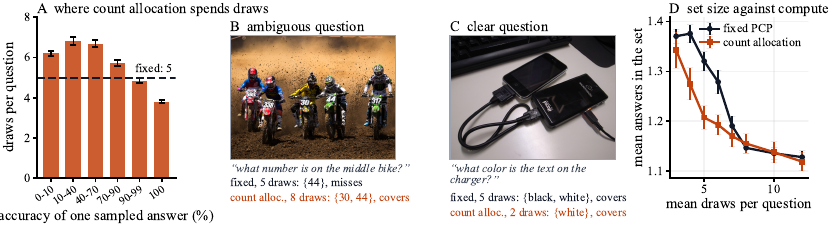}
\caption{TextVQA with Qwen2.5-VL-3B at an $80\%$ target and $B=5$. (A)~Mean samples \casa{} assigns, by question accuracy. (B)~An ambiguous question: \casa{}'s extra samples find the answer. (C)~A clear question: two samples suffice. (D)~Answers per set against mean samples over ten splits.}
\label{fig:textvqa}
\end{figure}

\subsection{Answer Sets for a Vision-Language Model}
\label{app:answers}
\paragraph{Data and task.}
TextVQA \citep{singh2019textvqa} asks about text in photographs, and OK-VQA \citep{marino2019okvqa} requires outside knowledge; both have ten human answers per question, and an answer counts as correct if at least three annotators gave it (TextVQA) or at least one did (OK-VQA). Each task is split at random into 1,500 questions each for $\mathcal D_m$, $\mathcal D_n$, and testing, with the rest for the conditional-coverage audit. The model limits which targets are reachable: an acceptable answer appears among 24 samples for about $88\%$ of TextVQA and $85\%$ of OK-VQA questions, so we target $80\%$ and $75\%$ coverage.

\paragraph{Model and sets.}
We use Qwen2.5-VL-3B \citep{bai2025qwen25vl}, prompted to answer with a single word or phrase, drawing 24 answers per question at temperature $0.7$. A ball around a discrete answer contains only that answer, so we use the sampling-only construction of \citet{su2024api}: draw $K$ answers and return every distinct answer whose frequency among them is at least a calibrated threshold. The allocation is unchanged, with exact coverage and size tables from the hypergeometric count of each answer in a random $K$-subset of the pool, the analogue of \eqref{eq:rb}. The regression features summarize one greedy decode: its token log-probabilities, entropies, and length.

\paragraph{Results.}
On TextVQA, \casa{} matches fixed sampling's set size with $44\%$, $29\%$, and $28\%$ fewer samples at $B=3$, $4$, and $5$, giving the questions the model finds hardest about 1.5 times the samples of those it always answers correctly (Figure~\ref{fig:textvqa}); at $B=3$, fixed sampling misses the target in three of ten splits, while \casa{} reaches it in all. On OK-VQA, sets are $11$--$15\%$ smaller at $B=5$--$8$.

\end{document}